\documentclass[journal]{IEEEtran}
\usepackage{amsmath,amsfonts,amsthm,amstext,amssymb,mathtools}
\usepackage{algorithm,algpseudocode}
\usepackage{array,makecell,soul,multirow,multicol} 
\usepackage{textcomp}
\usepackage{stfloats}
\usepackage{url}
\usepackage{verbatim}
\usepackage{graphicx}

\usepackage{subcaption}
\usepackage{epsfig,psfrag,subeqnarray}
\usepackage[colorlinks,linkcolor=blue, citecolor=blue]{hyperref}  
\usepackage{epstopdf,bm}
\usepackage{cancel}
\usepackage{booktabs}
\usepackage{color,colortbl}
\usepackage{setspace}
\usepackage{adjustbox}
\usepackage{pifont}
\usepackage{latexsym}
\usepackage{tikz}
\usetikzlibrary{bayesnet,calc,decorations.pathreplacing}
\usepackage{bbm}
\usepackage{orcidlink}
\usepackage{flushend}

\newtheorem{example}{Example}

\newtheorem{corollary}{Corollary}
\newtheorem{remark}{Remark}
\newtheorem{definition}{Definition}
\newtheorem{proposition}{Proposition}
\newtheorem{assumption}{Assumption}

\def\x{{\mathbf x}}
\def\y{{\mathbf y}}
\def\z{{\mathbf z}}
\def\f{{\mathbf{f}}}
\def\u{{\mathbf u}}
\def\m{{\mathbf m}}

\def\G{{\mathbb G}}

\def\cN{{\cal N}}

\def\bzeta{{\bm{\zeta}}}
\def\btheta{{{\bm{\theta}}}}

\definecolor{orange}{RGB}{200,0,100}

\newcommand{\cred}[1]{{\color{red}#1}}
\newcommand{\cblue}[1]{{\color{black}#1}}

\begin{document}

\title{Efficient Transformed Gaussian Process State-Space Models for Non-Stationary High-Dimensional Dynamical Systems}

\author{Zhidi Lin$^{\orcidlink{0000-0002-6673-511X}}$, 
Ying Li$^{\orcidlink{0009-0007-8100-6568}}$, 
Feng Yin$^{\orcidlink{0000-0001-5754-9246}}$,~\IEEEmembership{Senior Member,~IEEE},  
Juan Maro\~{n}as$^{\orcidlink{0000-0002-2939-7676}}$, and 
Alexandre H. Thi\'ery$^{\orcidlink{0000-0002-9542-509X}}$
\thanks{The work of Feng Yin was supported in part by the Shenzhen Science and Technology Program under Grant JCYJ20220530143806016 and by the NSFC under Grant No. 62271433. 
The work of Juan Maroñas was supported by the Spanish Government under the project PID2022-139856NB-I00.
The work of Alexandre H. Thiéry was supported by the Singapore Ministry of Education under grant MOE-T2EP20123-0010. (\textit{Corresponding authors: Feng Yin, Alexandre H. Thiéry})}
\thanks{
Zhidi Lin is with the Department of Statistics and Actuarial Science, University of Hong Kong, Hong Kong SAR, China  (Email: \href{mailto:zhidilin@hku.hk}{zhidilin@hku.hk}).
}
\thanks{
Ying Li and Alexandre H. Thiéry are with the Department of Statistics and Data Science, National University of Singapore, Singapore 117546  (Email: \href{mailto:ying-li@nus.edu.sg}{ying-li@nus.edu.sg}, \href{mailto:a.h.thiery@nus.edu.sg}{a.h.thiery@nus.edu.sg}).
}
\thanks{
Feng Yin is with the School of Artificial Intelligence, The Chinese University of Hong Kong, Shenzhen, Shenzhen 518172, China (Email: \href{mailto:yinfeng@cuhk.edu.cn}{yinfeng@cuhk.edu.cn}).
}
\thanks{
Juan Maro\~{n}as is with the Machine Learning Group, Universidad Aut\'{o}noma de Madrid, and with the Department of Quantitative Methods at CUNEF Universidad  (Email: \href{mailto:juan.maronas@cunef.edu}{juan.maronas@cunef.edu}). 
}
\thanks{This paper has supplementary downloadable material available at \url{http://ieeexplore.ieee.org}, provided by the author. The material includes additional method discussions and experimental details. This material is 199KB in size.}
}
\maketitle
\begin{abstract}
Gaussian process state-space models (GPSSMs) offer a principled framework for learning and inference in nonlinear dynamical systems with uncertainty quantification. However, existing GPSSMs are limited by the use of multiple independent stationary Gaussian processes (GPs), leading to prohibitive computational and parametric complexity in high-dimensional settings and restricted modeling capacity for non-stationary dynamics. 
To address these challenges, we propose an efficient transformed Gaussian process state-space model (ETGPSSM) for scalable and flexible modeling of high-dimensional, non-stationary dynamical systems. Specifically, our ETGPSSM integrates a single shared GP with input-dependent normalizing flows, yielding an expressive non-stationary implicit process prior that can capture complex transition dynamics while significantly reducing model complexity. 
For the inference of the implicit process, we develop a variational inference algorithm that jointly approximates the posterior over the underlying GP and the neural network parameters defining the normalizing flows. To avoid explicit variational parameterization of the latent states, we further incorporate the ensemble Kalman filter (EnKF) into the variational framework, enabling accurate and efficient state estimation.
Extensive empirical evaluations on synthetic and real-world datasets demonstrate the superior performance of our ETGPSSM in system dynamics learning, high-dimensional state estimation, and time-series forecasting, outperforming existing GPSSMs and neural network-based SSMs in terms of computational efficiency and accuracy.
\end{abstract}

\begin{IEEEkeywords}
Gaussian process, state-space model, normalizing flows, variational inference, high-dimensional dynamical systems.
\end{IEEEkeywords}

\newpage
\section{Introduction}
\label{sec:intro} 

\IEEEPARstart{S}{tate-space} models (SSMs) provide a flexible framework for modeling dynamical systems, where latent states evolve based on internal dynamics and external inputs \cite{sarkka2013bayesian}. When system dynamics are fully known, Bayesian filtering methods—such as the Kalman filter (KF), extended Kalman filter (EKF), ensemble Kalman filter (EnKF), and particle filter (PF)—enable sequential state estimation by combining prior knowledge with incoming observations, yielding reliable predictions under uncertainty. This capability makes SSMs widely applicable in robotics, finance, and signal processing, where modeling temporal dependencies is crucial \cite{kullberg2021online, khan2008distributing, tobar2015unsupervised}.

However, in complex real-world systems, such as climate modeling, physiological processes, and neural dynamics, \cblue{the underlying dynamics are often highly nonlinear, high-dimensional, and insufficiently understood to permit a physics-informed or explicitly parameterized model.} In such settings, traditional filtering methods cannot be applied in their standard form and must instead resort to data-driven approaches that learn the system dynamics directly from observations. This shift significantly complicates the task of achieving accurate and robust state estimation \cite{revach2022kalmannet}.

A variety of data-driven approaches for modeling dynamical systems have been proposed. Deterministic models, such as deep state-space models (DSSMs), employ neural networks to learn system dynamics and have gained popularity for their ability to capture high-dimensional, nonlinear behaviors \cite{girin2021dynamical}. However, these models often require large training datasets to avoid overfitting, and their generalization performance may degrade in uncertain or data-scarce environments. Moreover, their predictions can be difficult to interpret, especially when compared to more structured probabilistic models \cite{krishnan2017structured}. As a result, DSSMs may face challenges in practical deployment despite their modeling capacity \cite{ghosh2024danse}.

Stochastic models based on flexible random process priors provide a principled alternative within a Bayesian learning framework \cite{williams2006gaussian}. A prominent example is the class of Gaussian process state-space models (GPSSMs), which leverage Gaussian processes (GPs) to encapsulate uncertainty and provide implicit regularization through prior structure. These properties make GPSSMs particularly attractive for applications where uncertainty quantification and data efficiency are critical \cite{frigola2015bayesian,suwandi2023sparsityaware}. GPSSMs have been applied in diverse domains, including human pose and motion learning \cite{wang2007gaussian}, robotics and control \cite{deisenroth2013gaussian}, reinforcement learning \cite{arulkumaran2017deep,yan2020gaussian}, target tracking and navigation \cite{xie2020learning}, and magnetic-field sensing \cite{berntorp2023constrained}.

\cblue{In parallel to their practical applications, significant research efforts have been dedicated to advancing the learning and inference algorithms that underpin GPSSMs.} Early works either assumed a pre-learned GPSSM and concentrated solely on latent state inference—an impractical assumption for many real-world applications \cite{ko2009gp, deisenroth2009analytic, deisenroth2011robust}—or employed maximum \textit{a posteriori} (MAP) estimation to jointly infer the model and the latent states \cite{ wang2007gaussian,ko2011learning}, which is prone to overfitting. To address these issues, a important development came with the fully Bayesian treatment of GPSSMs using particle Markov chain Monte Carlo (PMCMC) methods \cite{frigola2013bayesian}, though the inherent cubic computational complexity of GP remains prohibitive for long time series. Subsequent efforts adopted reduced-rank GP approximations to alleviate the computational demands \cite{svensson2016computationally, svensson2017flexible, berntorp2021online,liu2023sequential}, yet PMCMC scalability remains a challenge. This has led to a growing interest in variational inference methods \cite{frigola2015bayesian,frigola2014variational, mchutchon2015nonlinear, eleftheriadis2017identification, doerr2018probabilistic, ialongo2019overcoming, curi2020structured, liu2020gpssm,lindinger2022laplace, fan2023free,lin2023towards,lin2023ensemble,lin2023towards_efficient,lin2022output} that can leverage sparse GP approximations with inducing points to enhance scalability \cite{titsias2009variational,hensman2013gaussian}.

However, existing GPSSMs, which employ multiple independent GPs to model system dynamics, remain limited to low-dimensional state spaces and face fundamental challenges when scaling to high-dimensional systems. Specifically, in variational GPSSMs, each output dimension of the state transition function is modeled by a separate GP, leading to a total computational cost that scales linearly with the state dimensionality. Since each GP has a cubic complexity in the number of data points due to covariance matrix inversion, the overall computational burden becomes prohibitive. Moreover, the number of model parameters grows quadratically with the state dimension, further exacerbating scalability issues. Additionally, all GPSSMs rely on stationary GP priors, which assume that the statistical properties of the transition function are invariant under input translation. This assumption often fails for systems exhibiting time-varying or non-stationary dynamics, limiting the expressiveness and practical applicability of conventional GPSSMs.

\cblue{To this end, we propose a novel GPSSM framework for non-stationary, high-dimensional dynamical systems, whose primary contribution is an efficient, flexible implicit process prior for the state transition function. By unifying Gaussian processes, input-dependent normalizing flows \cite{kobyzev2020normalizing}, and (Bayesian) neural networks, this prior is inherently non-stationary while maintaining computational and parametric complexity that scales gracefully with state dimensionality. The key contributions are summarized as follows:} 
\begin{itemize}
    \item Instead of employing multiple independent GPs for the transition function, we propose applying an efficient transformed Gaussian process (ETGP) \cite{maronas2022efficient}, an implicit random process that combines a shared GP with normalizing flows. Crucially, the flow parameters are made input-dependent through neural network modeling. This novel modeling approach offers two key advantages: (1) it captures complex, non-stationary transition functions that standard GP priors cannot represent, and (2) by utilizing only a single shared GP, it mitigates the computational and parametric complexities of traditional GPSSMs in high-dimensional spaces, enabling effective scaling to higher dimensions.

    \item For efficient inference, we develop a scalable variational algorithm to jointly approximate the posterior distributions of both the ETGP and the latent states.  Unlike standard GPs, the implicit nature of the ETGP prevents direct parametrization of the function-space posterior. To address this challenge, we explicitly approximate the posterior over the underlying GP and the associated neural network parameters, which together define the ETGP posterior. Additionally, we derive a generic and assumption-free variational lower bound that can incorporate an EnKF into the variational inference framework, eliminating the need to parameterize the variational distributions over latent states. This hybrid formulation thus not only offers accurate state estimation but also substantially reduces the computational cost of the parameter optimization.

    \item Finally, extensive empirical evaluations across system dynamics learning, high-dimensional state estimation, and time-series prediction demonstrate the superiority of our proposed method. It achieves significantly lower computational and parametric complexity than conventional GPSSMs while outperforming  state-of-the-art approaches, including deep neural network-based SSMs, in terms of learning and inference accuracy.
\end{itemize}

Part of this work was presented at IEEE ICASSP 2024 \cite{lin2023towards_efficient}. The current paper introduces input-dependent normalizing flows to further enhance model flexibility and proposes an EnKF-aided variational inference algorithm for efficient state estimation. More extensive empirical validations are also included to demonstrate the superiority of the proposed method.

The remainder of this paper is structured as follows. Section~\ref{sec:preliminaries_and_bg} introduces the preliminaries of state-space models and outlines key challenges in existing GPSSMs. Section~\ref{sec:ETGPSSM} presents our proposed efficient GPSSM framework for high-dimensional dynamical systems, including the associated variational inference algorithm. Experimental results are detailed in Section~\ref{sec:experiments}, followed by concluding remarks in Section~\ref{sec:conclusions}. Supporting results and proofs are deferred to the Appendix and supplementary materials.  
\section{Preliminaries}
\label{sec:preliminaries_and_bg}

Section~\ref{subsec:SSMs} introduces the SSM considered in this work. A brief review of GPSSM is presented in Section~\ref{subsec:GPSSMs}. The challenges associated with existing GPSSMs are discussed in detail in Section~\ref{subsec:challenges_GPSSMs}.

\subsection{State-Space Models (SSMs)} \label{subsec:SSMs}

An SSM defines a probabilistic dynamical system by specifying how latent states $\x_t \in \mathbb{R}^{d_x}$ evolve over time and generate observations $\y_t \in \mathbb{R}^{d_y}$. It is commonly expressed as:
\begin{subequations}
    \label{eq:SSM}
    \begin{align}
        \text{\textit{(Transition)}:}  \quad  &\x_{t+1} = \bm{f}(\x_t) + \mathbf{v}_t,  & \mathbf{v}_t \sim \cN(\bm{0}, \mathbf{Q}), \label{eq:ssm_transition}\\
        \text{\textit{(Emission)}:}   \quad &\y_{t} = \bm{C} \x_t + \mathbf{e}_t,	& \mathbf{e}_t \sim \cN(\bm{0}, \mathbf{R}).
    \end{align}
\end{subequations}
The latent states evolve according to a first-order Markov process, meaning that at any time step $t \in \mathbb{N}$, the next state $\x_{t+1}$ depends only on the current state $\x_t$ through the transition function $\bm{f}(\cdot): \mathbb{R}^{d_x} \mapsto \mathbb{R}^{d_x}$. In this work, the emission function is assumed to be linear, parameterized by a matrix $\bm{C} \in \mathbb{R}^{d_y \times d_x}$. For systems with nonlinear emissions, the latent state can be augmented to a higher-dimensional representation, effectively rendering the emission function linear \cite{doerr2018probabilistic}. This reparameterization helps mitigate non-identifiability issues frequently encountered in data-driven SSMs \cite{frigola2015bayesian}. Both latent states and observations are perturbed by zero-mean Gaussian noise, with covariance matrices $\mathbf{Q}$ and $\mathbf{R}$, respectively.

\begin{figure}[t!]
    \centering
    \footnotesize
    \begin{tikzpicture}[align = center, latent/.style={circle, draw, text width = 0.45cm}, observed/.style={circle, draw, fill=gray!20, text width = 0.45cm}, transparent/.style={circle, text width = 0.45cm}, node distance=1.2cm]
        \node[latent](x0) {${\x}_0$};
        \node[latent, right of=x0, node distance=1.4cm](x1) {${\x}_{1}$};
        \node[transparent, right of=x1](x2) {$\cdots$};
        \node[latent, right of=x2](xt-1) {$\!\!{\x}_{t-1}\!\!$};
        \node[latent, right of=xt-1, node distance=1.4cm](xt) {${\x}_{t}$};
        \node[transparent, right of=xt](xinf) {$\cdots$};
        \node[transparent, above of=x0](f0) {$\cdots$};
        \node[latent, above of=x1](f1) {${\f}_{1}$};
        \node[transparent, right of=f1](f2) {$\cdots$};
        \node[latent, above of=xt-1](ft-1) {$\!\!{\f}_{t-1}\!\!$};
        \node[latent, above of=xt](ft) {${\f}_{t}$};
        \node[transparent, right of=ft](finf) {$\cdots$};
        \node[observed, below of=x1](y1) {${\y}_{1}$};
        \node[transparent, below of=x2](y2) {$\cdots$};
        \node[observed, below of=xt-1](yt-1) {$\!\!{\y}_{t-1}\!\!$};
        \node[observed, below of=xt](yt) {${\y}_{t}$};
        \node[transparent, right of=yt](yinf) {$\cdots$};
        \draw[-latex] (x0) -- (f1);
        \draw[-latex] (f1) -- (x1);
        \draw[-latex] (x1) -- (f2);
        \draw[-latex] (ft-1) -- (xt-1);
        \draw[-latex] (xt-1) -- (ft);
        \draw[-latex] (x2) -- (ft-1);
        \draw[-latex] (ft) -- (xt);
        \draw[-latex] (xt) -- (finf);
        \draw[-latex] (x1) -- (y1);
        \draw[-latex] (xt-1) -- (yt-1);
        \draw[-latex] (xt) -- (yt);
        \draw[ultra thick]
        (f0) -- (f1)
        (f1) -- (f2)
        (f2) -- (ft-1)
        (ft-1) -- (ft)
        (ft) -- (finf)
        ;
    \end{tikzpicture}
    \caption{Graphical representation of GPSSM. The white circles represent the latent variables, while the gray circles represent the observable variables. The thick horizontal bar represents a set of fully connected, mutually correlated nodes, specifically, the GP.}
    \label{fig:graphical_model_GPSSM}
\end{figure}

\subsection{Gaussian Process State-Space Models (GPSSMs)} 
\label{subsec:GPSSMs}

As discussed in Section~\ref{sec:intro},  \cblue{system dynamics in real-world complex scenarios are often unknown, but only a set of noisy observations, $\y_{1:T} \!=\! \{\y_t\}_{t=1}^T$, where $T \in \mathbb{Z}_{+}$, is available.} Consequently, there has been growing interest in adopting data-driven approaches, such as GPs \cite{williams2006gaussian}, to model these dynamics and quantify the underlying uncertainties.

\vspace{.01in}
\subsubsection{\underline{\textbf{Gaussian process (GP)}}}  \label{subsubsec:GPs}
The GP is a generalization of the Gaussian distribution over infinite index sets, enabling the specification of distributions over functions $\tilde{f}(\cdot) : \mathbb{R}^{d_x} \mapsto \mathbb{R}$.  
\begin{definition}
A stochastic process \(\{\tilde{f}(\mathbf{x}) : \mathbf{x} \in \mathcal{X}\}\) indexed by a domain \(\mathcal{X} \subseteq \mathbb{R}^{d_x}\) is said to be a GP \cite{williams2006gaussian} if, for every finite set of inputs $\mathbf{X} = \{\x_i\}_{i=1}^N \subset \mathbb{R}^{d_x}$, the corresponding random vector $\tilde{\f} = \{\tilde{f}(\x_i)\}_{i=1}^N$ is jointly Gaussian. A GP is fully characterized by its mean function, often set to zero, and its covariance (kernel) function $k(\x, \x^\prime)$, which includes a set of hyperparameters $\bm{\theta}_{gp}$ \cite{williams2006gaussian}. We denote this as  
\[
\tilde{f}(\x) \sim \mathcal{GP}\big(0,\, k(\x, \x^\prime \,; \btheta_{gp})\big).
\]
\end{definition}
\cblue{A widely used family of kernels is the class of stationary kernels \cite{williams2006gaussian}, which is formally defined as follows.
\begin{definition}
    A kernel $k$ is stationary if there exists a function $\kappa: \mathbb{R}^{d_x} \times \mathbb{R}^{d_x}  \to \mathbb{R}$, such that $k(\mathbf{x}, \mathbf{x}') = \kappa(\mathbf{x} - \mathbf{x}')$ for all $\x, \x^\prime \subset \cal X$.
\end{definition}
This assumption encodes translation-invariant properties and is shared by common kernels such as the squared exponential (SE) and Matérn families \cite{williams2006gaussian}. Together with a constant mean function, a stationary kernel defines a stationary GP, whose finite-dimensional distributions are invariant to translations of the input space. Specifically, for any inputs $\mathbf{X}$, the distribution
\begin{equation}
   \label{eq:GP_finite}
   p(\tilde{\f} \mid \mathbf{X}) = \mathcal{N}(\tilde{\f} \mid \bm{0}, \, \mathbf{K}), 
\end{equation}
where $\mathbf{K} \in \mathbb{R}^{N \times N}$ and $[\mathbf{K}]_{i,j} = k(\x_i, \x_j\,; \btheta_{gp})$, remains unchanged under translation.
}

Given function values $\tilde{\f}$, the GP prediction at a new input $\x_* \!\subset\! \mathcal{X}$, denoted $p(\tilde{f}(\x_*) \vert \tilde{\f}, \mathbf{X})$, is also Gaussian, with posterior mean $\xi(\x_*)$ and posterior variance $\Xi(\x_*)$ given by:
\begin{subequations}
\label{eq:GP_posterior}
    \begin{align}
        & \xi(\mathbf{x}_*) = \mathbf{K}_{\mathbf{x}_*, \mathbf{X}} \mathbf{K}^{-1} \tilde{\f}, 
        \label{eq:post_mean}\\
        & \Xi(\mathbf{x}_*) = k(\mathbf{x}_*, \mathbf{x}_*)  - \mathbf{K}_{\mathbf{x}_*, \mathbf{X}} \mathbf{K}^{-1} \mathbf{K}_{\mathbf{x}_*, \mathbf{X}}^\top,
        \label{eq:post_cov}
    \end{align}
\end{subequations}
where $\mathbf{K}_{\x_*, \mathbf{X}}$ is the cross-covariance vector between the test input $\x_*$ and the training inputs $\mathbf{X}$.


\vspace{.01in}
\subsubsection{\underline{\textbf{Gaussian process state-space model}}}
\label{subsubsec:gpssms}
Placing a GP prior over the transition function $\bm{f}(\cdot)$ in the classic SSM (see Eq.~\eqref{eq:SSM}) yields the GPSSM \cite{frigola2015bayesian}:
\begin{subequations}
    \label{eq:gpssm}
    \begin{align}
        &  \bm{f}(\cdot)  \sim \mathcal{GP} \left(\bm{0}, \bm{k}(\cdot, \cdot)\right), \label{eq:gp_in_gpssm}\\
        & {\f}_{t} \triangleq \bm{f}(\mathbf{x}_{t-1}), \\
        & \mathbf{x}_{0} \sim p(\mathbf{x}_{0}),  \\
        &  \mathbf{x}_{t} \vert {\f}_{t}  \sim \mathcal{N}\left( \x_t \vert {\f}_{t}, \mathbf{Q}\right), \\
        &  \mathbf{y}_{t} \vert  \mathbf{x}_{t} \sim \cN \left(\y_t \vert \bm{C} \mathbf{x}_{t}, \mathbf{R}\right),
    \end{align}
\end{subequations}
where the initial state $p(\x_0)$ is assumed known and Gaussian for simplicity. A graphical representation is shown in Fig.~\ref{fig:graphical_model_GPSSM}.

When the state dimension $d_x > 1$, the transition function $\bm{f}(\cdot): \mathbb{R}^{d_x} \mapsto \mathbb{R}^{d_x}$ is typically modeled using $d_x$ independent GPs. Each component function $f_d(\cdot): \mathbb{R}^{d_x} \mapsto \mathbb{R}$ maps to one output dimension:
\begin{equation}
\f_t = \bm{f}(\x_{t-1}) = \{{f}_d(\x_{t-1})\}_{d=1}^{d_x} = \{\mathrm{f}_{t,d}\}_{d=1}^{d_x} , 
\label{eq:multivariate_GP}
\end{equation}
where each GP has its own kernel function and hyperparameters \cite{doerr2018probabilistic}. \cblue{Given a single observed sequence $\{\mathbf{y}_t\}_{t=1}^T$, the central task in GPSSMs involves jointly learning the transition dynamics and noise parameters, i.e., learning $[\btheta_{gp}, \mathbf{Q}, \mathbf{R}]$, while simultaneously inferring both the corresponding latent states and the underlying functions.}

\subsection{Challenges in Existing Variational GPSSMs}
\label{subsec:challenges_GPSSMs}
\cblue{In contrast to the standard GP described in Eq.~\eqref{eq:GP_posterior}, where the GP posterior is computed directly from observed input-output pairs, inferring the GP in a GPSSM is more complex. The key challenge is that the inputs to the transition function are the latent states $\{\mathbf{x}_{t}\}_{t=0}^{T-1}$, which are themselves unobserved. Consequently, the GP posterior over the transition function must be inferred jointly with the latent state trajectory. This is typically done approximately via variational inference \cite{frigola2014variational}. The variational GPSSMs} employ sparse GP approximations \cite{titsias2009variational,hensman2013gaussian}, a widely used technique in GP-based models \cite{maronas2021transforming,frigola2015bayesian} due to its analytical tractability and reduced computational cost. The core idea is to augment the GP with a smaller set of inducing points, which act as a sparse approximation of the full GP. 

More specifically, we can introduce $M \!\ll\! N$ inducing inputs $\mathbf{Z} = \{\z_i\}_{i=1}^M \subset \mathbb{R}^{d_x}$ and their corresponding inducing outputs $\u =\{\mathrm{u}_i\}_{i=1}^M = \{\tilde{f}(\z_i)\}_{i=1}^M$, which inherit the same GP prior as the original process, i.e., $\u \sim \mathcal{N}(\bm{0}, \mathbf{K}_{\mathbf{Z}, \mathbf{Z}})$. The augmented GP prior can be written as:\footnote{With slight notation abuse, we denote the distribution of GP prior as $p(\tilde{f})$.}
\begin{equation}
    p(\tilde{f}, \u) = p(\tilde{f} \vert \mathbf{u}) p(\mathbf{u}).
\end{equation}
To approximate the posterior, a variational distribution is defined over the $M$ inducing outputs, $q(\u) \!=\! \mathcal{N}(\u \vert \mathbf{m}, \mathbf{S})$, while the prior conditional is used for the rest of the GP:
\begin{equation}
    q(\tilde{f}, \u) =  p(\tilde{f} \vert \mathbf{u}) q(\mathbf{u}).
\end{equation}
This leads to an explicit expression for the variational approximation to the GP posterior:
\begin{align} 
\label{Eq:variational_GP_poserior}
    q(\tilde{f}) = \int q(\tilde{f}, \u) \mathrm{d} \u = \cN(\tilde{f}(\x_*) \mid \xi_p(\x_*), \, \Xi_p(\x_*)),
\end{align}
where
\begin{subequations}
\label{eq:variational_GP_poserior_details}
\begin{align}
    &  \xi_p(\x_*) \!=\! \mathbf{K}_{\x_{*}, \mathbf{Z}}\mathbf{K}_{\mathbf{Z},\mathbf{Z}}^{-\!1} \mathbf{m}, \\
    & \Xi_p(\x_*) \!=\! \mathbf{K}_{\x_{*}, \x_{*}}  \!-\! \mathbf{K}_{\x_{*}, \mathbf{Z}}\mathbf{K}_{\mathbf{Z}, \mathbf{Z}}^{-\!1}[\mathbf{K}_{\mathbf{Z}, \mathbf{Z}} \!-\! \mathbf{S}]\mathbf{K}_{\mathbf{Z}, \mathbf{Z}}^{-\!1}\mathbf{K}_{\x_{*}, \mathbf{Z}}^\top,
\end{align}
\end{subequations}
reducing the computational complexity from $\mathcal{O}(N^3)$ to $\mathcal{O}(M^3)$. 
Nevertheless, existing variational GPSSMs face several computational and modeling challenges, outlined below:
\begin{itemize}
    \item Modeling multi-dimensional state transition function using separate GPs for each output dimension results in a computational complexity of $\mathcal{O}(d_x M^3)$, scaling linearly with the latent state dimension $d_x$.
    \item The variational approximation introduces a number of variational parameters that scale quadratically with the state dimensionality $d_x$. Specifically, variational GPSSMs define inducing inputs $\{\mathbf{Z}_d\}_{d=1}^{d_x} \in \mathbb{R}^{d_x \times (M \times d_x)}$, variational means $\{\mathbf{m}_d\}_{d=1}^{d_x} \in \mathbb{R}^{d_x \times M}$, and covariance matrices $\{\mathbf{S}_d\}_{d=1}^{d_x} \in \mathbb{R}^{d_x \times (M \times M)}$ for the $d_x$-dimensional inducing outputs $\{\mathbf{u}_{d}\}_{d=1}^{d_x}$. As the state dimension increases, the quadratic scaling of inducing inputs and the associated variational parameters leads to significant computational overhead \cite{lin2023towards_efficient}.
    \item \cblue{Most GPSSMs impose a stationary GP prior on the transition function, as shown in Eq.~\eqref{eq:gp_in_gpssm}. However, this stationarity assumption may be inadequate for capturing time-varying or input-dependent dynamical behavior. 
    Additionally, modeling each output dimension with an independent GP ignores potential correlations across state dimensions, which can limit the model's capacity.
    }
\end{itemize} 
\cblue{It is important to clarify that the ``stationarity'' in this work refers specifically to the properties of the transition function prior, not the observed time series $\mathbf{y}_{1:T}$. A non-stationary transition function is one whose characteristics (e.g., amplitude, smoothness, or periodicity) vary with the input state. Notably, a GPSSM with a stationary kernel in its transition function can still generate a non-stationary time series.}
In the next section, we present a methodology to resolve these issues.

\section{Efficient Transformed Gaussian Process State-Space Models} \label{sec:ETGPSSM}

We first present our proposed efficient and flexible modeling framework for high-dimensional state spaces in Section~\ref{subsec:ETGPSSM_modeling}. Next, Section~\ref{subsec:learning_inference} introduces the corresponding EnKF-aided variational inference algorithm. Finally, Section~\ref{subsec:EnK_ELBO_evaluation} details the evaluation of the EnKF-aided variational lower bound.

\subsection{Efficient Modeling for High-Dimensional State Space} \label{subsec:ETGPSSM_modeling}

\begin{figure*}
    \centering
    \subfloat[\cblue{Limited modeling capacity of the warped GPs (Eq.~\eqref{eq:ETGP}).}]{
    \includegraphics[width=0.48\linewidth]{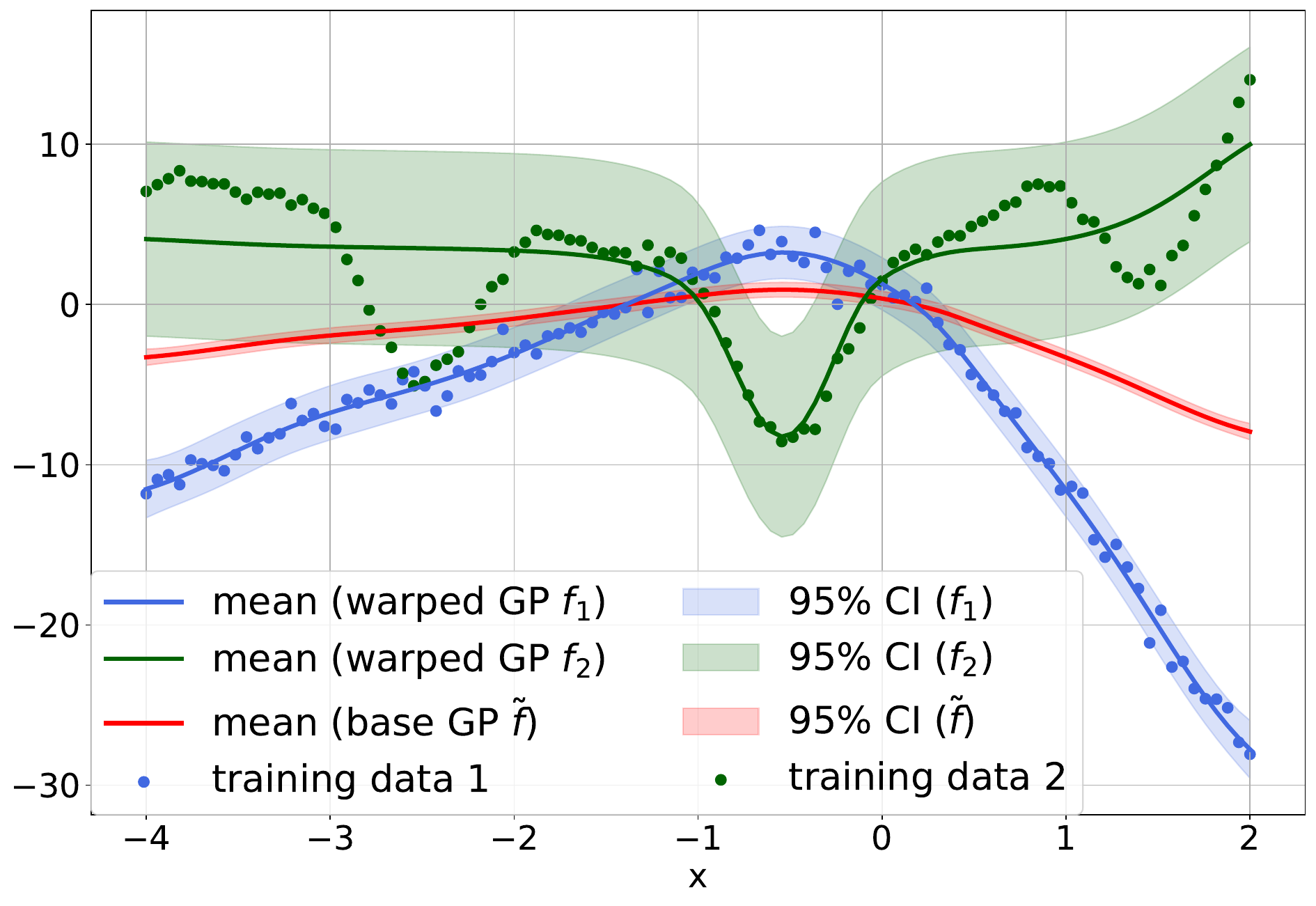}
    \label{subfig:etgp_demo}
    }  
    \subfloat[\cblue{Independent GPs versus ETGPs (Eq.~\eqref{eq:time_varying_linear_flow}).}]{
    \includegraphics[width=0.485\linewidth]{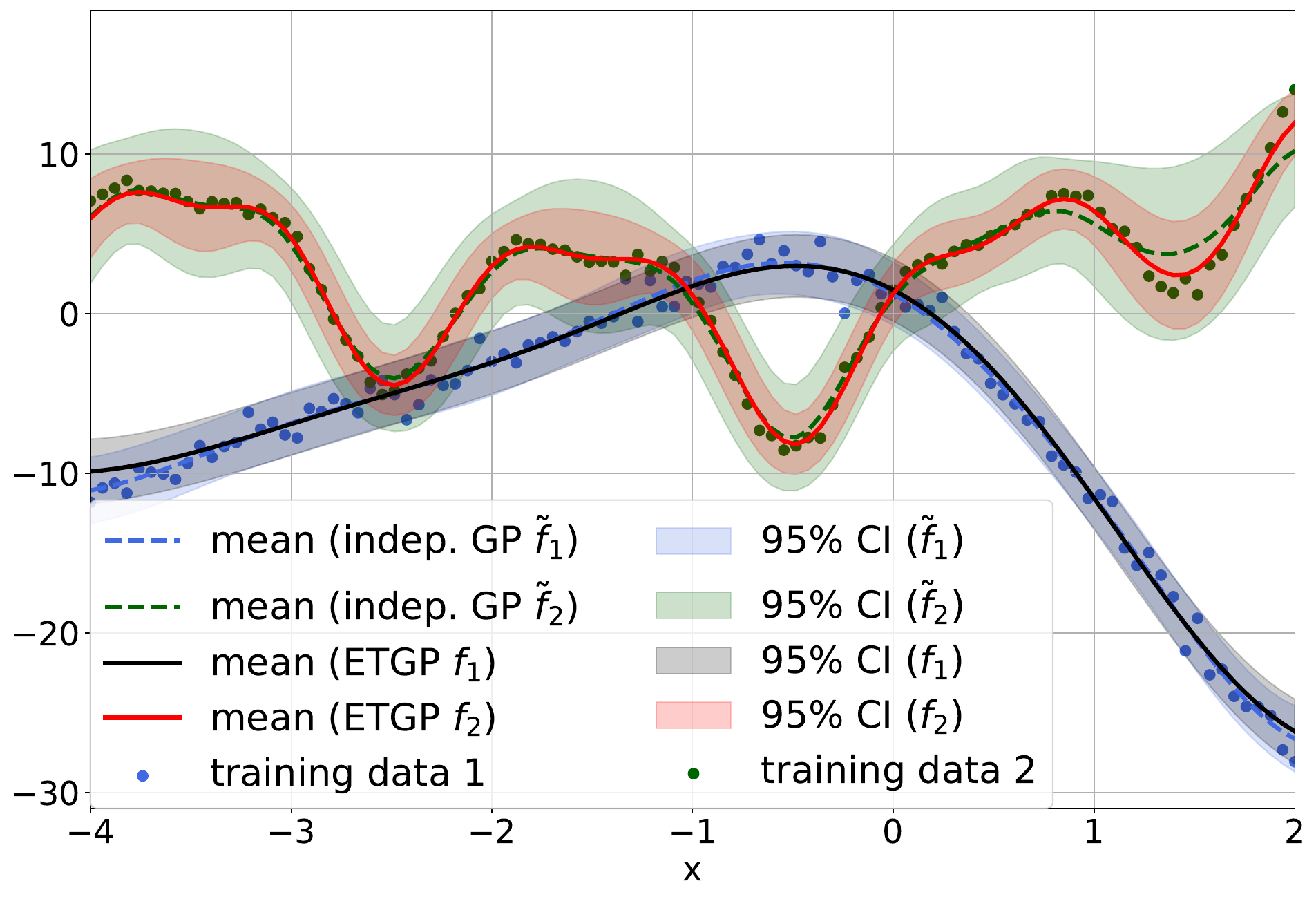}
    \label{subfig:etgp_demo_flow}
    }
    \caption{Two-dimensional regression example comparing GP models. The non-stationary ETGP accurately captures local variations, outperforming stationary warped GPs and independent GPs.
    \vspace{-.1in}
    }
    \label{fig:ETGP_demo_inability}
\end{figure*}

In Section~\ref{subsec:challenges_GPSSMs}, we discussed the computational and modeling challenges that arise from using separate GPs for each output dimension of the transition function in high-dimensional state spaces. To address these limitations, \cite{lin2023towards_efficient} proposed warping a single shared GP using normalizing flows \cite{kobyzev2020normalizing} to model each output dimension of the transition function:
\begin{align}
\label{eq:ETGP}
\Tilde{f}(\cdot) \!\sim\! \mathcal{GP}(0, k(\cdot, \cdot)),~ 
 f_d(\cdot) \!=\! \G_{\btheta_d}(\Tilde{f}(\cdot)),~  d \!=\! 1,\ldots, d_x,
\end{align}
where each dimension-specific normalizing flow, ${\G}_{\bm{\theta}_d}(\cdot)$, is an element-wise, bijective, and differentiable transformation parameterized by $\bm{\theta}_d \in \Theta$. Two concrete examples are presented below.

\begin{example} \label{example:linear_flow}
     If $\G_{\btheta_{d}}(\cdot)$ is a simple linear flow, then we have
    \begin{equation}
     f_{d}(\cdot) = \alpha_d \cdot \Tilde{f}(\cdot) + \beta_d, \nonumber
    \end{equation}
    with $\btheta_d = [\alpha_d, \beta_d]^\top \in \mathbb{R}^2$. 
\end{example}
\begin{example}
\label{example:SAL_flows}
If $\G_{\btheta_{d}}(\cdot)$ is a Sinh-Arcsinh-Linear (SAL) flow \cite{maronas2021transforming,rios2019compositionally}, we have:
\begin{equation}
    \label{eq:SAL_flows}
    f_d(\cdot)= \alpha_{d} \sinh \left[\varphi_{d} \operatorname{arcsinh}\left( \tilde{f}(\cdot) \right) - \gamma_{d} \right]+\beta_{d}, \nonumber
\end{equation}
with $\btheta_{d} = [\alpha_{d}, \beta_{d}, \gamma_{d}, \varphi_{d}]^\top\in \mathbb{R}^4$.
\end{example}
More complex flows can be achieved by stacking additional layers of SAL flow and/or incorporating more advanced deep neural network-based flows, such as RealNVP \cite{dinh2017density}.

The main advantage of this approach lies in eliminating the need for multiple independent GPs by using computationally efficient flow transformations. This design bypasses the $\mathcal{O}(d_x M^3)$ computational complexity and quadratic parameter growth discussed in Section~\ref{subsec:challenges_GPSSMs}. Moreover, since all dimensions share the same underlying GP, the resulting warped functions are inherently dependent, potentially mitigating the limitations of modeling independent outputs in standard GPSSMs.

However, the warped GP in Eq.~\eqref{eq:ETGP} remains stationary; Also, being derived from a single shared GP, it may lack the expressiveness needed to capture complex latent dynamics—resulting in suboptimal performance in practice \cite{lin2023towards_efficient}.
\cblue{A demonstrative example is provided as follows.
\begin{example}
\label{example:regression_noisy_kink}
    Consider a two-dimensional regression problem where the outputs are generated as follows:
    \begin{align*}
        y_1 &= 3.5 \cdot h(x), \\
        y_2 &= -1.5 \cdot h(x) + 5 \sin(\pi x) + 2 \cos(2\pi x),
    \end{align*}
    where \( h(x) = 0.8 + (x + 0.2) \left(1 - \frac{5}{1 + \exp(-2x)}\right) + \varepsilon \) is a noisy kink function, and noise \( \varepsilon \sim \mathcal{N}(0, 0.25^2) \). The function  \( y_1 \) is linearly dependent on the noisy kink function, and \( y_2 \) introduces additional nonlinear, periodic components and a negative correlation with \( y_1 \). A dataset of 100 input-output pairs was generated with inputs in \( [-4, 2] \), as shown in Fig.~\ref{fig:ETGP_demo_inability}. The warped GPs, which are constructed by a base GP with an SE kernel and two SAL flows, were trained by maximizing the variational lower bound, see e.g. \cite[Section~4.1]{maronas2021transforming} for more details.

    The results in Fig.~\ref{subfig:etgp_demo} show that the learned element-wise SAL flow \( \mathbb{G}_{\bm{\theta}_1} \) successfully warps the base GP \( \tilde{f} \) into \( f_1 \), which fits \( y_1 \) well. In contrast, applying \( \mathbb{G}_{\bm{\theta}_2} \) to the same base process yields \( f_2 \), which fails to capture the complex structure of \( y_2 \) due to the limited modeling flexibility of the input-independent SAL flows (see Example~\ref{example:SAL_flows}), resulting in high predictive uncertainty.
\end{example}
}

To overcome the limitations in expressiveness, we draw inspiration from recent advances in adaptive parameterization \cite{maronas2021transforming, gu2023mamba} and introduce non-stationarity and input dependence into the warped GPs. Specifically, we generate non-stationary processes $\{f_d(\cdot)\}_{d=1}^{d_x}$ by conditioning the flow parameters on the input through a shallow neural network $\textsc{nn}: \mathbb{R}^{d_x} \mapsto \Theta$ parameterized by weights $\mathbf{w}$. In the SSM context, this is formulated as\footnote{Note that $\x_t$ can always be extended to accommodate control systems incorporating a deterministic control input $\bm{c}_t \in \mathbb{R}^{d_c}$ by augmenting the latent state to $[\x_t^\top, \bm{c}_t^\top]^\top \in \mathbb{R}^{d_x + d_c}$. For brevity, however, we omit explicit reference to $\bm{c}_t$ in our notation throughout this paper.}:
\begin{equation} \label{eq:nn_flow_para}
    \btheta_t = \textsc{nn}_{\mathbf{w}}(\x_{t-1}), 
\end{equation}
where $\btheta_t = \{\theta_{t,d}\}_{d=1}^{d_x}$. This formulation allows the flow parameters to vary with input at each time step $t$, enabling the model to capture local variations and complex dynamics. Additionally, note that the parameter amortization learning shown in Eq.~\eqref{eq:nn_flow_para} prevents the number of flow parameters from growing with time step $t$ while also improving parameter efficiency through weight sharing across dimensions.

\begin{figure}
    \centering
    \includegraphics[width=\linewidth]{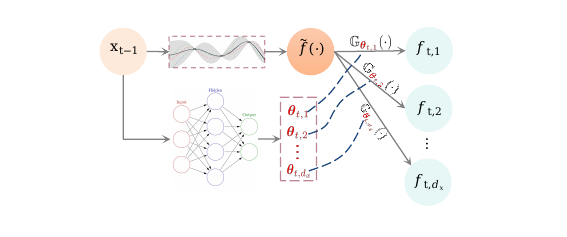}
    \caption{Non-stationary ETGP \cblue{transition} function in SSMs}
    \label{fig:ETGP}
\end{figure}

We may further impose a prior $p_{\bm{\psi}}(\mathbf{w})$ over $\mathbf{w}$ with hyperparameter $\bm{\psi}$ (e.g., $\mathbf{w} \!\sim\! \cN(\mathbf{w} \vert \bm{0}, \operatorname{diag}(\bm{\psi}))$) to regularize the neural network's parameters and mitigate overfitting issues. In summary, this efficient transformed GP modeling can be mathematically expressed as follows:
\begin{subequations}
\label{eq:ETGP_bayesian}
\begin{align} 
    & \Tilde{f}(\cdot) \sim \mathcal{GP}(0, k(\cdot, \cdot)), \ \ \mathbf{w} \sim p_{\bm{\psi}}(\mathbf{w}), \\
    & \btheta_t = \textsc{nn}_{\mathbf{w}}(\x_{t-1}), \ \ f_d(\cdot) = \G_{\theta_{t,d}}(\Tilde{f}(\cdot)),  \ d\!=\!1,..., {d_x},
\end{align} 
\end{subequations}
 where $\{f_d(\cdot)\}_{d=1}^{d_x}$ implicitly characterize $d_x$ random processes \cite{ma2019variational}. 
 To better understand the modeling, Fig.~\ref{fig:ETGP} illustrates the corresponding graphical representation with $p_{\bm{\psi}}(\mathbf{w}) = \delta(\mathbf{w} - \bar{\mathbf{w}})$, where $\delta(\cdot)$ is the Dirac delta measure and $\bar{\mathbf{w}}$ is a deterministic vector. The following example extends the Example~\ref{example:linear_flow}, providing a more concrete illustration.
 \begin{example} \label{example:time_varying_linear_flow}
    In the case of the linear flow, we have
    \begin{subequations}
    \label{eq:time_varying_linear_flow}
    \begin{align}
        & \bm{f}(\cdot) = \bm{\alpha}_{t} \cdot \tilde{f}(\cdot) + \bm{\beta}_{t},\\
        & \btheta_{t} = \textsc{nn}_{\mathbf{w}}(\x_{t-1}), \ \ \ \mathbf{w}\sim p_{\bm{\psi}}(\mathbf{w}).
    \end{align}
    \end{subequations}
    where $\btheta_{t} \!=\! [\bm{\alpha}_{t}^\top, \bm{\beta}_{t}^\top]^\top$, and $\bm{\alpha}_{t} \!=\! \{\alpha_{t,d}\}_{d=1}^{d_x}$, $\bm{\beta}_{t} \!=\! \{\beta_{t,d}\}_{d=1}^{d_x}$. 
    
    \cblue{An illustrative two-dimensional regression example using the resulting transformed GPs, ${f}_1$ and ${f}_2$, is shown in Fig.~\ref{subfig:etgp_demo_flow}, where the neural network weights follow a Dirac delta measure. The training data consist of 100 input–output pairs consistent with Example~\ref{example:regression_noisy_kink}. For comparison, results from two independent GPs, $\tilde{f}_1$ and $\tilde{f}_2$, with SE kernels are also presented. Both models were trained by maximizing their respective variational lower bounds \cite{hensman2013gaussian, maronas2022efficient}.

    While both models capture the general structure of the target functions, the independent GP $\tilde{f}_2$ exhibits a poorer fit and overestimates uncertainty for the more complex $y_2$, primarily due to its stationary SE kernel's inability to adapt to local variations. In contrast, the ETGP achieves a notably better fit for these complex patterns through its input-dependent linear flow, which induces non-stationary behavior. Moreover, the ETGP shares a single latent GP across outputs, offering improved parameter efficiency and scalability compared to modeling each output with a separate GP, as discussed in Section~\ref{subsec:challenges_GPSSMs}.}
\end{example}

It is noteworthy that this efficient transformed Gaussian process (ETGP) modeling strategy was originally proposed to tackle large-scale multi-class classification problems \cite{maronas2022efficient}. Its efficiency arises from employing a single GP instead of one per class, ensuring that the computational cost of GP operations remains independent of the number of classes. This results in a substantial reduction in overall computation. In this work, we adapt this framework to model high-dimensional transition functions in SSMs. Notably, ETGP implicitly models $d_x$ dependent random processes \cite{ma2019variational}, introducing output correlations that are often neglected in existing GPSSM formulations. We elaborate further on this below.
\begin{corollary}\label{corollary:dependent_nonstationary_GP}
    If $p_{\bm{\psi}}(\mathbf{w}) = \delta(\mathbf{w} - \bar{\mathbf{w}})$, where $\bar{\mathbf{w}}$ is a deterministic vector, then the ETGP defined in Example~\ref{example:time_varying_linear_flow} becomes a $d_x$-dimensional dependent non-stationary Gaussian process with \cblue{an input-dependent} covariance function. Specifically, at arbitrary time steps $t$ and $t^\prime$, we have
    \begin{equation}
        \begin{bmatrix}
            \bm{f}(\x_{t-1})\\
            \bm{f}(\x_{t^\prime-1})
        \end{bmatrix} \sim \cN\left( 
        \begin{bmatrix}
            \bm{\beta}_t\\
            \bm{\beta}_{t^\prime}
        \end{bmatrix}, 
        \bm{\Lambda}_{t, t^\prime}
        \right),
    \end{equation}
    where the covariance matrix $\bm{\Lambda}_{t, t^\prime} \in \mathbb{R}^{2d_x \times 2 d_x}$ is given by
    \begin{equation}
     \bm{\Lambda}_{t, t^\prime} \!=\! 
        \begin{bmatrix}
             k(\x_{t-1}, \x_{t-1}) \cdot \bm{\alpha}_t \bm{\alpha}_t^\top & k(\x_{t-1}, \x_{t^\prime-1}) \cdot \bm{\alpha}_t \bm{\alpha}_{t^\prime}^\top\\
             k(\x_{t^\prime-1}, \x_{t-1}) \cdot \bm{\alpha}_{t^\prime} \bm{\alpha}_t^\top & k(\x_{{t^\prime}-1}, \x_{{t^\prime}-1}) \cdot \bm{\alpha}_{t^\prime} \bm{\alpha}_{t^\prime}^\top
        \end{bmatrix}.
    \end{equation}
\end{corollary}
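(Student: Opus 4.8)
The plan is to verify the claimed joint Gaussianity and compute the first two moments directly, exploiting the fact that under the Dirac-delta prior $p_{\bm{\psi}}(\mathbf{w}) = \delta(\mathbf{w} - \bar{\mathbf{w}})$ the flow parameters $\btheta_t = \textsc{nn}_{\bar{\mathbf{w}}}(\x_{t-1})$ are deterministic functions of the (conditioning) inputs $\x_{t-1}$, so the only randomness in $\bm{f}(\x_{t-1}) = \bm{\alpha}_t \cdot \tilde f(\x_{t-1}) + \bm{\beta}_t$ comes from the scalar Gaussian variable $\tilde f(\x_{t-1})$. First I would note that, by the definition of a GP, the pair $(\tilde f(\x_{t-1}), \tilde f(\x_{t^\prime-1}))$ is jointly Gaussian with mean zero and covariance entries $k(\x_{t-1},\x_{t-1})$, $k(\x_{t-1},\x_{t^\prime-1})$, $k(\x_{t^\prime-1},\x_{t^\prime-1})$. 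Then the stacked vector $[\bm{f}(\x_{t-1})^\top, \bm{f}(\x_{t^\prime-1})^\top]^\top$ is an affine map of this two-dimensional Gaussian vector: concretely $[\bm{f}(\x_{t-1})^\top,\bm{f}(\x_{t^\prime-1})^\top]^\top = \mathbf{A}\,[\tilde f(\x_{t-1}), \tilde f(\x_{t^\prime-1})]^\top + [\bm{\beta}_t^\top,\bm{\beta}_{t^\prime}^\top]^\top$ where $\mathbf{A} = \operatorname{blkdiag}(\bm{\alpha}_t, \bm{\alpha}_{t^\prime}) \in \mathbb{R}^{2d_x\times 2}$ (the block-diagonal matrix with columns $\bm{\alpha}_t$ and $\bm{\alpha}_{t^\prime}$). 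Since affine transformations preserve Gaussianity, the result is Gaussian.

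Next I would compute the moments. The mean is immediate: $\E[\bm{f}(\x_{t-1})] = \bm{\alpha}_t \cdot \E[\tilde f(\x_{t-1})] + \bm{\beta}_t = \bm{\beta}_t$, and likewise $\bm{\beta}_{t^\prime}$ for the second block, giving the claimed mean vector. For the covariance, I would use $\Cov(\mathbf{A}\z + \mathbf{b}) = \mathbf{A}\,\Cov(\z)\,\mathbf{A}^\top$ with $\Cov(\z)$ the $2\times2$ kernel Gram matrix above. Expanding the block product $\mathbf{A}\,\Cov(\z)\,\mathbf{A}^\top$ block by block yields exactly $\bm{\Lambda}_{t,t^\prime}$: the $(1,1)$ block is $k(\x_{t-1},\x_{t-1})\,\bm{\alpha}_t\bm{\alpha}_t^\top$, the $(1,2)$ block is $k(\x_{t-1},\x_{t^\prime-1})\,\bm{\alpha}_t\bm{\alpha}_{t^\prime}^\top$, and symmetrically for the remaining blocks, using the symmetry $k(\x_{t-1},\x_{t^\prime-1}) = k(\x_{t^\prime-1},\x_{t-1})$. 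Alternatively, and perhaps cleaner for exposition, I would compute entrywise: $\Cov(f_i(\x_{t-1}), f_j(\x_{t^\prime-1})) = \alpha_{t,i}\,\alpha_{t^\prime,j}\,\Cov(\tilde f(\x_{t-1}), \tilde f(\x_{t^\prime-1})) = \alpha_{t,i}\,\alpha_{t^\prime,j}\,k(\x_{t-1},\x_{t^\prime-1})$, which assembles into the rank-structured blocks.

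I would also briefly address the two qualitative claims in the statement: (i) \emph{dependence} across output dimensions follows because each block $\bm{\alpha}_t\bm{\alpha}_t^\top$ is rank one (for $\bm{\alpha}_t \neq \bm{0}$), so the $d_x$ components of $\bm{f}(\x_{t-1})$ are all deterministic scalings of the same Gaussian variable $\tilde f(\x_{t-1})$ and hence perfectly correlated — in sharp contrast to the independent-GP construction; (ii) \emph{non-stationarity} follows because the covariance between $\bm{f}(\x_{t-1})$ and $\bm{f}(\x_{t^\prime-1})$ depends on $\x_{t-1}, \x_{t^\prime-1}$ not only through the kernel term $k(\x_{t-1},\x_{t^\prime-1})$ but also, crucially, through the input-dependent amplitudes $\bm{\alpha}_t = \bm{\alpha}(\x_{t-1})$ and $\bm{\alpha}_{t^\prime} = \bm{\alpha}(\x_{t^\prime-1})$ (and the mean through $\bm{\beta}_t = \bm{\beta}(\x_{t-1})$), so translating the inputs does not leave the finite-dimensional law invariant even when $k$ is stationary.

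I do not anticipate a genuine obstacle here — the proof is essentially an application of the closure of Gaussian vectors under affine maps. The only point requiring mild care is bookkeeping: organizing the affine representation and the resulting block-matrix product so that the $2d_x \times 2d_x$ covariance comes out in exactly the displayed block form, and being explicit that the Dirac-delta assumption is what makes $\bm{\alpha}_t, \bm{\beta}_t$ deterministic (without it, marginalizing over $\mathbf{w}$ would destroy both Gaussianity and the clean moment expressions). A secondary cosmetic choice is whether to present the argument via the block matrix $\mathbf{A}$ or entrywise; I would lead with the entrywise computation for transparency and mention the matrix form as a compact summary.
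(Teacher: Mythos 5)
Your proposal is correct and follows essentially the same route as the paper's own proof: exploit the Dirac-delta prior to make $\bm{\alpha}_t,\bm{\beta}_t$ deterministic given the inputs, use joint Gaussianity of the base GP at the two inputs, and obtain the mean and block covariance from the (affine) linearity of expectation and bilinearity of covariance. Your added touches — the explicit block-diagonal affine map, the entrywise covariance computation, and the remarks on rank-one blocks implying dependence and input-dependent amplitudes implying non-stationarity — are consistent elaborations of the same argument rather than a different approach.
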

\begin{proof}
This result follows from the Gaussianity of the underlying process and the linearity of covariance, as detailed in Appendix~\ref{app:proof_corollary}.
\end{proof}

With the computationally efficient and flexible function prior defined by ETGP $\bm{f}(\cdot) \!=\! \{f_d(\cdot)\}_{d=1}^{d_x}$, we define our proposed SSM, termed ETGPSSM, as follows: 
\begin{subequations}
\label{eq:ETGPSSM}
\begin{align}
    & \bm{f}(\cdot) \sim \mathcal{ETGP}(\tilde{f}(\cdot), p_{\bm{\psi}}(\mathbf{w})), \\
    & \mathbf{x}_0 \sim p(\x_0), \\ 
    & \f_t \triangleq \bm{f}(\x_{t-1}), \\
    & \mathbf{x}_{t} \vert \f_{t}  \sim \mathcal{N}(\x_t \vert \f_{t}, \mathbf{Q}), \\
    & \mathbf{y}_{t} \vert  \mathbf{x}_{t} \sim \cN ( \y_t \vert \bm{C} \mathbf{x}_{t}, \mathbf{R}), \label{eq:linear_emission}
\end{align}
\end{subequations}
where the ETGP prior is defined by the base GP $\tilde{f}(\cdot)$, neural network weight prior $p_{\bm{\psi}}(\mathbf{w})$, and the normalizing flows, see Eq.~\eqref{eq:ETGP_bayesian}. Unless stated otherwise, in this work we focus on the linear flow described in Example~\ref{example:time_varying_linear_flow} for simplicity and illustration. This variant requires learning only two parameters per dimension, offering a more efficient alternative to more expressive but costly flow parameterizations. For simplicity, we also assume that the initial state prior $p(\x_0)$ is Gaussian and known.

\begin{remark}
In contrast to existing GPSSMs with independent, stationary transitions, the ETGPSSM yields dependent and non-stationary transition functions $\{f_d(\cdot)\}_{d=1}^{d_x}$ due to their shared base GP and weight priors. In the case of linear input-dependent flows, the transition function remains a GP—like a GPSSM—but enriched with dependence and non-stationarity, see Corollary~\ref{corollary:dependent_nonstationary_GP}, offering greater modeling flexibility while retaining GP interpretability.
\end{remark}

The primary objective in ETGPSSM, as in GPSSM, is the joint estimation of model hyperparameters $\bm{\vartheta} \!=\! \{\btheta_{gp}, \bm{\psi}, \mathbf{Q}, \mathbf{R}\}$ and inference of both the ETGP and latent states. Given a sequence of observations $\y_{1:T}$, the task is to estimate $\bm{\vartheta}$ and compute the posterior distributions $p(\bm{f} \vert \y_{1:T})$ for the ETGP and $p(\x_{0:T} \vert \y_{1:T})$ for the latent states, where $\x_{0:T} = \{\x_t\}_{t=0}^T$.

However, the marginal likelihood $p(\mathbf{y}_{1:T} \vert \bm{\vartheta})$ is generally intractable, making both model evidence maximization and posterior inference computationally challenging. Furthermore, unlike GPs, which admit closed-form expressions for their prior and (approximate) posterior (see Eqs.~\eqref{eq:GP_finite}, \eqref{eq:GP_posterior} \& \eqref{Eq:variational_GP_poserior}), the ETGP defines an implicit stochastic process. As a result, both the prior and posterior are analytically intractable \cite{ma2019variational}, presenting significant challenges for inference. In the following subsections, we introduce a variational inference framework that enables joint learning of model hyperparameters and approximate Bayesian inference over the ETGP and latent states.


\subsection{Approximate Bayesian Inference} \label{subsec:learning_inference}

\cblue{Variational inference proceeds by maximizing a more tractable} evidence lower bound (ELBO) on the log-marginal likelihood, which is equivalent to minimizing the Kullback–Leibler (KL) divergence between the variational approximation and the true posterior \cite{theodoridis2020machine}.  Constructing the ELBO requires specifying both the model joint distribution and the variational distribution. 

Following the existing GPSSM framework, \cblue{we factorize the joint distribution of the ETGPSSM defined in Eq.~\eqref{eq:ETGPSSM} as:}
\begin{equation}
\begin{aligned}
\!\!\!p(\y_{1:T}, \x_{0:T}, \bm{f}) &\!=\! p(\y_{1:T}, \x_{1:T} \vert \bm{f}, \x_0) p(\bm{f}) p(\x_0) \\
&\!=\! \prod_{t=1}^{T} p(\y_t \vert \x_t) p(\x_{t} \vert \x_{t-1}, \bm{f}) p(\bm{f}) p(\x_0),
\end{aligned}
\end{equation}
where $p(\x_0) = \cN(\x_0 \vert \bm{0}, \bm{I})$, and \cblue{under slight abuse of notation\footnote{In general a random process does not have density function.}, $p(\bm{f})$ denotes the ETGP prior, which is generally intractable because it requires marginalizing over the base GP $\tilde{f}$ and the neural network weights $\mathbf{w}$.
}
A common variational family in the GPSSM literature takes the form:
\begin{equation}
q(\x_{0:T}, \bm{f}) = q(\x_{1:T} \vert \bm{f}, \x_0) q(\bm{f}) q(\x_0),
\end{equation}
which yields the ELBO, denoted by $\mathcal{L}_2$:
\begin{equation}
\label{eq:ELBO_general_2}
\mathcal{L}_2 = \mathbb{E}_{q(\x_{0:T}, \bm{f})} \left[\log \frac{p(\y_{1:T}, \x_{0:T}, \bm{f})} {q(\x_{0:T}, \bm{f}) }\right].
\end{equation}
While general, this formulation requires explicit parameterization of the variational distribution  $q(\x_{1:T} \vert \bm{f}, \x_0)$, which can introduce a large number of parameters and increase optimization complexity \cite{lin2023ensemble,ialongo2019overcoming}. 

To improve learning efficiency, in this work, we instead adopt an alternative approach based on marginalizing out $\x_{1:T}$ and infer latent states using an EnKF, avoiding the need to explicitly model this variational term. Specifically,  marginalizing over $\x_{1:T}$ leads to the following joint distribution:
\begin{align}
   \!\!\! p(\y_{1:T}, \x_0, \bm{f}) &\!=\! p(\x_0)\, p(\bm{f}) \int p(\y_{1:T}, \x_{1:T} \vert \bm{f}, \x_0) \, \mathrm{d}\x_{1:T} \nonumber \\
    &\!=\! p(\x_0)\, p(\bm{f})\, p(\y_{1:T} \vert \x_0, \bm{f}),
    \label{eq:marginalized_joint_distribution}
\end{align}
where $p(\y_{1:T} \vert \x_0, \bm{f})$ will be approximated sequentially using an EnKF, as detailed in Section~\ref{subsec:EnK_ELBO_evaluation}. Since only $\x_0$ and $\bm{f}$ remain latent, we adopt the factorized variational distribution:
\begin{equation}
\label{eq:variational_distribution_marginalized_general}
    q(\x_0, \bm{f}) = q(\x_0)\, q(\bm{f}).
\end{equation}
This leads to the marginalized ELBO:
\begin{equation}
\label{eq:ELBO_general}
    \mathcal{L} = \mathbb{E}_{q(\x_0, \bm{f})} \left[ \log \frac{p(\y_{1:T}, \x_0, \bm{f})}{q(\x_0)\, q(\bm{f})} \right].
\end{equation}
Compared to $\mathcal{L}_2$, the alternative bound $\mathcal{L}$ avoids direct parameterization of $q(\x_{1:T} \vert \bm{f}, \x_0)$, helping reduce optimization overhead and yielding a tighter variational lower bound (see Proposition~\ref{prop:bound_comparison}).
\begin{proposition}  
\label{prop:bound_comparison}
The bound $\mathcal{L}$ in Eq.~\eqref{eq:ELBO_general} satisfies $\mathcal{L} \ge \mathcal{L}_2$. 
Equality holds if and only if the variational approximation $q(\x_{1:T} \vert \bm{f}, \x_0)$ matches the true posterior $p(\x_{1:T} \vert \bm{f}, \x_0, \y_{1:T})$.  
\end{proposition}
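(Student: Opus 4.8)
The plan is to show that the gap $\mathcal{L} - \mathcal{L}_2$ equals an expected Kullback--Leibler divergence between the variational conditional $q(\x_{1:T} \mid \bm{f}, \x_0)$ and the true conditional posterior $p(\x_{1:T} \mid \bm{f}, \x_0, \y_{1:T})$, which is nonnegative and vanishes exactly when the two coincide. First I would rewrite $\mathcal{L}_2$ from Eq.~\eqref{eq:ELBO_general_2} by splitting the logarithm using the chain-rule factorizations of both the model joint and the variational joint: write $p(\y_{1:T}, \x_{0:T}, \bm{f}) = p(\y_{1:T} \mid \x_0, \bm{f})\, p(\x_{1:T} \mid \x_0, \bm{f}, \y_{1:T})\, p(\x_0)\, p(\bm{f})$ and $q(\x_{0:T}, \bm{f}) = q(\x_{1:T} \mid \bm{f}, \x_0)\, q(\x_0)\, q(\bm{f})$. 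Substituting these into $\mathcal{L}_2$ and grouping terms, the integrand becomes $\log\frac{p(\y_{1:T}\mid\x_0,\bm{f})\, p(\x_0)\, p(\bm{f})}{q(\x_0)\, q(\bm{f})} - \log\frac{q(\x_{1:T}\mid\bm{f},\x_0)}{p(\x_{1:T}\mid\x_0,\bm{f},\y_{1:T})}$.

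Next I would take expectations. Because the first grouped term no longer depends on $\x_{1:T}$, integrating it against $q(\x_{1:T}\mid\bm{f},\x_0)$ is trivial and, recalling the marginalization identity in Eq.~\eqref{eq:marginalized_joint_distribution} which gives $p(\y_{1:T}, \x_0, \bm{f}) = p(\x_0)\, p(\bm{f})\, p(\y_{1:T}\mid\x_0,\bm{f})$, that term's expectation over $q(\x_0)q(\bm{f})$ is exactly $\mathcal{L}$ as defined in Eq.~\eqref{eq:ELBO_general}. The second grouped term, when integrated over $\x_{1:T}$ against $q(\x_{1:T}\mid\bm{f},\x_0)$, is by definition $\mathrm{KL}\!\left(q(\x_{1:T}\mid\bm{f},\x_0)\,\|\,p(\x_{1:T}\mid\x_0,\bm{f},\y_{1:T})\right)$, still a function of $\x_0$ and $\bm{f}$. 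Hence
\begin{equation}
\mathcal{L}_2 = \mathcal{L} - \mathbb{E}_{q(\x_0)q(\bm{f})}\!\left[\mathrm{KL}\!\left(q(\x_{1:T}\mid\bm{f},\x_0)\,\big\|\,p(\x_{1:T}\mid\x_0,\bm{f},\y_{1:T})\right)\right]. \nonumber
\end{equation}
By Gibbs' inequality the KL term is nonnegative, so $\mathcal{L} \ge \mathcal{L}_2$; and since it is an expectation of a nonnegative quantity under a distribution with full support, it equals zero if and only if $q(\x_{1:T}\mid\bm{f},\x_0) = p(\x_{1:T}\mid\x_0,\bm{f},\y_{1:T})$ ($q$-almost everywhere), giving the equality condition.

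The only real subtlety, rather than an obstacle, is bookkeeping: one must be careful that the ``$p(\bm{f})$'' appearing in these manipulations is handled consistently as a process density under the abuse of notation flagged in the excerpt, and that all conditional densities are well-defined so that the chain-rule factorizations and Fubini-type exchange of integration order are legitimate; since both $\mathcal{L}$ and $\mathcal{L}_2$ are already assumed finite and the inducing-point machinery makes the relevant $\bm{f}$-marginals tractable, this is routine. No approximation properties of the EnKF are needed here — the EnKF only enters later when $p(\y_{1:T}\mid\x_0,\bm{f})$ is actually evaluated — so the proposition is a clean, purely algebraic consequence of the chain rule for KL divergence.
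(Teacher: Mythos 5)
Your proposal is correct and follows essentially the same route as the paper's own proof in Appendix~\ref{appx:bounds_comparision}: both factor the joint as $p(\y_{1:T},\x_0,\bm{f})\,p(\x_{1:T}\mid\x_0,\bm{f},\y_{1:T})$ and the variational distribution as $q(\x_0,\bm{f})\,q(\x_{1:T}\mid\bm{f},\x_0)$, arriving at the identity $\mathcal{L}_2 = \mathcal{L} - \mathbb{E}_{q(\x_0,\bm{f})}\bigl[\operatorname{KL}\bigl(q(\x_{1:T}\mid\bm{f},\x_0)\,\|\,p(\x_{1:T}\mid\bm{f},\x_0,\y_{1:T})\bigr)\bigr]$ and concluding via nonnegativity of the KL divergence. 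Your additional remarks on measure-theoretic bookkeeping and the role of the EnKF are accurate but not needed beyond what the paper states.
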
 
\begin{proof}
    Proof can be found in Appendix~\ref{appx:bounds_comparision}
\end{proof}

Theoretically, this tighter lower bound can improve inference accuracy and convergence \cite{theodoridis2020machine}. Next, we proceed to derive the ELBO $\mathcal{L}$. 

As noted earlier, the lack of closed-form expressions for the ETGP’s prior and posterior prevents direct computation of $\mathcal{L}$ in function space. To overcome this, we leverage the generative construction of the ETGP—illustrated in Fig.~\ref{fig:ETGP}—and infer the posterior distributions over $\tilde{f}$ and $\mathbf{w}$, which collectively characterize the ETGP posterior. Specifically, by explicitly representing $\tilde{f}$ and $\mathbf{w}$, the marginalized joint distribution becomes:
\begin{equation}
\label{eq:ETGPSSM_joint}
     p( \y_{1:T}, \x_{0}, \mathbf{w}, \tilde{f})  =  p(\y_{1:T} \vert \x_0, \mathbf{w}, \tilde{f}) p(\x_0) p_{\bm{\psi}}(\mathbf{w}) p(\tilde{f}),
\end{equation}
where $p(\tilde{f})$ represents the GP prior and $p_{\bm{\psi}}(\mathbf{w}) = \cN(\mathbf{w} \vert \bm{0}, \operatorname{diag}(\bm{\psi}))$ is a Gaussian prior over the neural network weights, assuming a Bayesian neural network with diagonal covariance.
%
%
Similarly, instead of using Eq.~\eqref{eq:variational_distribution_marginalized_general}, we employ the following variational distribution: 
\begin{equation} 
\label{eq:ETGPSSM_variational}
q(\x_{0}, \mathbf{w}, \tilde{f}) = q(\tilde{f}) q(\mathbf{w}) q(\x_0), 
\end{equation} 
which assumes independence between $\tilde{f}$, $\mathbf{w}$ and $\x_0$.  
Here the variational distributions of $\x_0$ and $\mathbf{w}$ are defined as follows:
\begin{equation}
    \left\{
    \begin{aligned}
        & q(\x_{0}) = \mathcal{N}(\x_0 \vert \m_{0}, \mathbf{L}_{0}\mathbf{L}_{0}^\top),\\
        & q(\mathbf{w}) = \mathcal{N}(\mathbf{w} \vert \m_{\mathrm{w}}, \operatorname{diag}(\bm{\sigma}^2_{\mathrm{w}})),\\
    \end{aligned}
    \right.
\end{equation}
where ${\m}_{0} \!\in\! \mathbb{R}^{d_x}$, lower-triangular matrix $\mathbf{L}_{0} \!\in\! \mathbb{R}^{d_x \times d_x}$, and $\{\mathbf{w}, \bm{\sigma}^2_{\mathrm{w}}\}$ are free variational parameters.  

For the variational distribution $q(\tilde{f})$, we adopt a sparse GP approximation \cite{titsias2009variational,hensman2013gaussian}, as defined in Eq.~\eqref{Eq:variational_GP_poserior}, which helps to reduce the computational complexity from $\mathcal{O}(T^3)$ to $\mathcal{O}(M^3)$.

With Eqs.~\eqref{eq:ETGPSSM_joint} and \eqref{eq:ETGPSSM_variational}, we can derive our ELBO, $\mathcal{L}$, which is summarized in the following proposition.

\begin{proposition}
		\label{prop:approx_ELBO_}
		Under all the assumed variational distributions, together with the model joint distribution in Eq.~\eqref{eq:ETGPSSM_joint}, the ELBO can be reformulated as follows:
        \begin{align}
             \mathcal{L} = &  \mathbb{E}_{q(\mathbf{w}, \tilde{f})}\left[ \sum_{t=1}^T \log p(\y_t \vert \y_{1:t-1}, \mathbf{w}, \tilde{f}) \right] -\operatorname{KL}(q(\u) \| p(\u)) \nonumber \\
                & -\operatorname{KL}(q(\mathbf{w}) \| p(\mathbf{w})) -\operatorname{KL}(q(\x_0) \| p(\x_0)), 
                \label{eq:EnVI_lower_bound}
        \end{align}
		where the first terms (log-likelihood) can be analytically evaluated using the EnKF (detailed in Section~\ref{subsec:EnK_ELBO_evaluation}). The three KL divergence terms can also be computed in closed form, due to the Gaussian nature of the prior and variational distributions \cite{theodoridis2020machine}.
\end{proposition}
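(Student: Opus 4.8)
\textbf{Proof proposal for Proposition~\ref{prop:approx_ELBO_}.}

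The plan is to start from the marginalized ELBO $\mathcal{L}$ in Eq.~\eqref{eq:ELBO_general}, rewritten in the augmented representation where $\bm{f}$ is replaced by the pair $(\tilde{f}, \mathbf{w})$. Using the joint factorization in Eq.~\eqref{eq:ETGPSSM_joint} and the mean-field variational family in Eq.~\eqref{eq:ETGPSSM_variational}, I would substitute both into $\mathcal{L} = \mathbb{E}_{q(\x_0, \mathbf{w}, \tilde{f})}\big[\log \tfrac{p(\y_{1:T}, \x_0, \mathbf{w}, \tilde{f})}{q(\x_0)q(\mathbf{w})q(\tilde{f})}\big]$ and split the logarithm of the ratio into four additive pieces: the log-likelihood term $\log p(\y_{1:T}\vert \x_0, \mathbf{w}, \tilde{f})$, and three log-ratios $\log\tfrac{p(\x_0)}{q(\x_0)}$, $\log\tfrac{p_{\bm\psi}(\mathbf{w})}{q(\mathbf{w})}$, $\log\tfrac{p(\tilde f)}{q(\tilde f)}$. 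Because the variational distribution factorizes, each log-ratio's expectation collapses onto its own marginal, producing $-\operatorname{KL}(q(\x_0)\|p(\x_0))$ and $-\operatorname{KL}(q(\mathbf{w})\|p(\mathbf{w}))$ directly.

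Next I would handle the GP term. Here I invoke the standard sparse-GP augmentation: replace $p(\tilde f)$ by $p(\tilde f \vert \u)p(\u)$ and $q(\tilde f)$ by $p(\tilde f \vert \u)q(\u)$ as in Eq.~\eqref{Eq:variational_GP_poserior}. The conditional $p(\tilde f \vert \u)$ cancels in the ratio, leaving $\log\tfrac{p(\u)}{q(\u)}$, whose expectation under $q(\u)$ is $-\operatorname{KL}(q(\u)\|p(\u))$. This is the usual Titsias/Hensman collapsing argument and is the step most prone to notational sloppiness, since $p(\tilde f)$ is only a density in the informal sense noted in the paper's footnote; I would state carefully that the cancellation is rigorous at the level of any finite-dimensional marginal, which is all that enters the likelihood.

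For the log-likelihood term, I would rewrite $p(\y_{1:T}\vert \x_0, \mathbf{w}, \tilde f)$ using the chain rule of probability as $\prod_{t=1}^T p(\y_t \vert \y_{1:t-1}, \x_0, \mathbf{w}, \tilde f)$, so that $\mathbb{E}_{q}[\log p(\y_{1:T}\vert\x_0,\mathbf{w},\tilde f)] = \mathbb{E}_{q(\mathbf{w},\tilde f)}\big[\mathbb{E}_{q(\x_0)}\sum_{t=1}^T \log p(\y_t\vert\y_{1:t-1},\x_0,\mathbf{w},\tilde f)\big]$. The paper's notation $\log p(\y_t\vert\y_{1:t-1},\mathbf{w},\tilde f)$ in the statement folds the $\x_0$-expectation (and the forward recursion over $\x_{1:t-1}$) into the EnKF evaluation described in Section~\ref{subsec:EnK_ELBO_evaluation}; I would make explicit that the $q(\x_0)$ average is absorbed there and that the inner predictive densities are exactly the one-step-ahead likelihoods the EnKF approximates. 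Collecting the four pieces yields Eq.~\eqref{eq:EnVI_lower_bound}.

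The main obstacle is conceptual rather than computational: carefully justifying that working with $p(\tilde f)$ and $q(\tilde f)$ as if they were densities is legitimate (the implicit-process subtlety), and that the EnKF-based evaluation of $\sum_t \log p(\y_t\vert\y_{1:t-1},\mathbf{w},\tilde f)$ is the correct reduction of the intractable $\mathbb{E}_{q(\x_0)}\log p(\y_{1:T}\vert\x_0,\mathbf{w},\tilde f)$ term — i.e., that marginalizing $\x_{1:T}$ analytically and the sequential filtering recursion produce the same object. Everything else is a routine application of linearity of expectation and the definition of KL divergence under the assumed Gaussian prior/variational forms.
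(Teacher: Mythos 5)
Your proposal is correct and follows essentially the same route as the paper's own derivation: substitute the factorized joint distribution and the mean-field variational family, collapse the $\x_0$ and $\mathbf{w}$ log-ratios into KL terms, apply the sparse-GP augmentation so that $p(\tilde f \vert \u)$ cancels and leaves $-\operatorname{KL}(q(\u)\|p(\u))$, and factorize the likelihood by the chain rule into one-step predictive terms evaluated by the EnKF. Your explicit treatment of where the $q(\x_0)$ average is absorbed (and the finite-dimensional-marginal caveat for treating $p(\tilde f)$ as a density) is, if anything, slightly more careful than the paper, which only notes in passing that the $t=1$ term is $\log p(\y_1 \vert \x_0, \mathbf{w}, \tilde f)$.
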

\begin{proof}
    The proof can be found in Appendix~\ref{appex:ELBO_derivation}.
\end{proof}
\begin{remark}
The objective in Eq.~\eqref{eq:EnVI_lower_bound} can also be derived from $\mathcal{L}_2$ under two specific assumptions (see Supplement~\textcolor{blue}{A}), akin to \cite{lin2023ensemble}. However, our derivation in this work is generic for arbitrary random processes and is assumption-free, offering greater simplicity, conceptual clarity, and broader applicability. For further details on the relationship between $\mathcal{L}$ and $\mathcal{L}_2$,  we refer the reader to Supplement~\textcolor{blue}{A}.
\end{remark}

\subsection{EnKF-aided ELBO Evaluation}
\label{subsec:EnK_ELBO_evaluation}

We next describe how to evaluate $p(\y_t \vert \y_{1:t-1}, \mathbf{w}, \tilde{f})$ in Eq.~\eqref{eq:EnVI_lower_bound} using the EnKF. First, note that
\begin{equation}
\label{eq:evaluation_log_likelihood}
    \begin{aligned}
        p(\y_t \vert \y_{1:t-1}, \mathbf{w}, \tilde{f}) & =  \int \underbracket{p(\y_t \vert \x_t)}_{\text{likelihood}}  \underbracket{p(\x_t \vert \mathbf{w}, \tilde{f}, \y_{1:t-1})}_{\text{prediction}} \mathrm{d} \x_{t},
    \end{aligned}
\end{equation}
where the prediction distribution $p(\x_t \vert \mathbf{w}, \tilde{f}, \y_{1:t-1})$ reads:
\begin{equation}
\label{eq:prediction_etgpssm}
     \int \underbracket{p(\x_t \vert \mathbf{w}, \tilde{f}, \x_{t-1})}_{\text{transition}} \underbracket{p(\x_{t-1} \vert \mathbf{w}, \tilde{f}, \y_{1:t-1})}_{\text{filtering}}
 \mathrm{d} \x_{t-1}.
\end{equation}
The transition term can be expressed as
\begin{align}
    p(\x_t \vert  \mathbf{w}, \tilde{f}, \x_{t-1}) & = \int p(\x_t \vert \f_t) p(\f_t \vert \mathbf{w}, \tilde{f}, \x_{t-1}) \mathrm{d} \f_t \nonumber \\
    & = \cN\left(\x_{t} \mid \bm{\alpha}_t \cdot \tilde{f}(\x_{t-1}) + \bm{\beta}_t, \ \mathbf{Q} \right), \label{eq:etgp_transition}
\end{align}
with $p(\f_t \vert \mathbf{w}, \tilde{f}, \x_{t-1}) = \delta\left( \f_t - (\bm{\alpha}_t \cdot \tilde{f}(\x_{t-1}) + \bm{\beta}_t) \right)$, and $\{\bm{\alpha}_t, \bm{\beta}_t\}$ being the outputs of the neural network parameterized by $\mathbf{w}$.

Consequently, we employ the EnKF, a widely used method for handling nonlinearities and high-dimensional state spaces \cite{evensen1994sequential}, to estimate the filtering distribution $p(\x_{t} \vert \mathbf{w}, \tilde{f}, \y_{1:t})$ at each time step $t$. The details are as follows.

\subsubsection{\underline{\textbf{Prediction}}} Given the ensemble of states $\{\x_{t-1}^{(i)}\}_{i=1}^N$ from the posterior distribution  $p(\x_{t-1} \vert \mathbf{w}, \tilde{f}, \y_{1:t-1})$ at time $t - 1$, and conditioning on $\mathbf{w} \sim q(\mathbf{w})$ and $\tilde{f} \sim q(\tilde{f})$, we use the state transition in Eq.~\eqref{eq:etgp_transition} to generate $N$ predictive samples:
\begin{equation}
\label{eq:ETGP_predictive}
 \!\! \bar{\x}_{t}^{(i)} \!\sim\!  \cN\left(\x_{t} \mid \bm{\alpha}_t \cdot \tilde{f}(\x_{t-1}^{(i)}) + \bm{\beta}_t, \ \mathbf{Q} \right) 
\end{equation}
for $1 \leq i \leq N$.
The prediction distribution in Eq.~\eqref{eq:prediction_etgpssm} is then approximated by the predictive ensemble:
\begin{equation}
\label{eq:predictive_distri_enkf}
    p(\x_t \vert \mathbf{w}, \tilde{f}, \y_{1:t-1}) \approx \cN( \x_t \mid \bar{\mathbf{m}}_t, \ \bar{\mathbf{P}}_t),
\end{equation}
where
\begin{subequations}
\label{eq:empirical_mean_cov}
\begin{align}
    \overline{\mathbf{m}}_t & =\frac{1}{N} \sum_{i=1}^N \overline{\mathbf{x}}_t^{(i)}, \\
    \overline{\mathbf{P}}_t & =\frac{1}{N-1} \sum_{i=1}^N\left(\overline{\mathbf{x}}_t^{(i)}-\overline{\mathbf{m}}_t\right)\left(\overline{\mathbf{x}}_t^{(i)}-\overline{\mathbf{m}}_t\right)^{\top}.
\end{align}
\end{subequations}

\subsubsection{\underline{\textbf{Update}}}
Observations $\mathbf{y}_t$ is incorporated to update the predictive ensemble. Each sample is updated via:
\begin{subequations}
\label{eq:EnKF_update}
    \begin{align}
        & \ \ \ \ \ \ \text{\textit{update}:}  \ \ \ \ \mathbf{x}_t^{(i)} = \bar{\mathbf{x}}_t^{(i)} + \mathbf{G}_t (\mathbf{y}_t + \mathbf{e}_t^{(i)} - \bm{C} \bar{\mathbf{x}}_t^{(i)}), \\
        & \text{\textit{reparameterization}:}  \ \ \mathbf{e}_t^{(i)} = \bm{0} + \mathbf{R}^{\frac{1}{2}} \bm{\epsilon}, \   \bm{\epsilon} \!\sim\! \cN(\bm{0}, \bm{I}_{d_y}),    
        \label{eq:repara_2}
    \end{align}
\end{subequations}
where $\mathbf{G}_t$ is the Kalman gain matrix given by:
\begin{equation}
    \mathbf{G}_t = \bar{\mathbf{P}}_t \bm{C}^\top (\bm{C} \bar{\mathbf{P}}_t \bm{C}^\top + \mathbf{R})^{-1}. 
\end{equation}
This yields the updated ensemble $\{\x_{t}^{(i)}\}_{i=1}^N$ which is used to recursively propagate forward through time.
\begin{remark}
    The prediction and update steps are fully differentiable, as the sampled latent states become differentiable with respect to the state transition function $\bm{f}(\cdot)$ and the noise covariances $\mathbf{Q}$ and $\mathbf{R}$. This allows for end-to-end gradient-based optimization (e.g.using Adam \cite{kingma2015adam}, as described in Algorithm~\ref{alg:EnKF_version2}), making the framework practical for joint learning and inference \cite{chen2022autodifferentiable, lin2023ensemble}.
\end{remark}

Based on the outlined prediction and update steps, the likelihood $p(\y_t \vert  \y_{1:t-1}, \mathbf{w}, \tilde{f})$ in Eq.~\eqref{eq:evaluation_log_likelihood} can be easily evaluated. At each time step $t$, we have:
\begin{equation} \label{eq:evaluation_log_likelihood_analytical}
    \begin{aligned}
      p(\y_t \vert  \y_{1:t-1}, \mathbf{w}, \tilde{f}) \approx \cN(\y_t \mid  \bm{C} \bar{\mathbf{m}}_t, \ \bm{C} \bar{\mathbf{P}}_t \bm{C}^\top ),
    \end{aligned}
\end{equation}
due to the Gaussian prediction distribution, see Eq.~\eqref{eq:predictive_distri_enkf}, and the linear emission model, see Eq.~\eqref{eq:linear_emission}.

We now proceed to evaluate the variational lower bound $\mathcal{L}$, as defined in Eq.~\eqref{eq:EnVI_lower_bound}. \cblue{At each iteration, we apply the reparameterization trick to sample from the Gaussian variational distributions $q(\mathbf{w})$, $q(\mathbf{x}_0)$, and $q(\tilde{f})$. For instance, a sample from \( q(\tilde{f}) \) at input \( \mathbf{x}_{t} \) is generated as
\begin{align*}
    \tilde{f}(\mathbf{x}_{t}) = \xi_p(\mathbf{x}_{t}) + \sqrt{\Xi_p(\mathbf{x}_{t})} \cdot \epsilon, \quad \epsilon \sim \mathcal{N}(0, \, 1),
\end{align*}
where $\xi_p(\cdot)$ and $\Xi_p(\cdot)$ are given by Eq.~\eqref{eq:variational_GP_poserior_details}. Similarly, for $q(\mathbf{w})$ and $q(\mathbf{x}_0)$, we also use the standard Gaussian reparameterization.}
This enables unbiased Monte Carlo estimates of the expected log-likelihood term in $\mathcal{L}$.
And thanks to the reparameterization trick \cite{kingma2019introduction}, the ELBO $\mathcal{L}$ remains differentiable with respect to both the model parameters $\bm{\vartheta} \!=\! \{\bm{\theta}_{gp}, \bm{\psi}, \mathbf{Q}, \mathbf{R} \}$ and the variational parameters $\bzeta \!=\! \{ \mathbf{m}_0, \mathbf{L}_0, \mathbf{m}_{\mathrm{w}}, \bm{\sigma}_{\mathrm{w}}^2, \mathbf{m}, \mathbf{S}, \mathbf{Z} \}$. We leverage modern automatic differentiation tools such as PyTorch to compute gradients via backpropagation through time, and apply gradient-based optimization methods (e.g., Adam \cite{kingma2015adam}) to maximize $\mathcal{L}$ \cite{paszke2019pytorch}. The complete procedure for training ETGPSSM with EnKF-based latent state inference is summarized in Algorithm \ref{alg:EnKF_version2}.

\begin{algorithm}[t!]
    \caption{ETGPSSM}
    \label{alg:EnKF_version2}
    \begin{algorithmic}[1]
        \Statex {\bf Input}:  $\bm{\vartheta} = \{\btheta_{gp}, \bm{\psi}, \mathbf{Q}, \mathbf{R}\}, ~\bzeta, ~\y_{1:T}$, $\x_0^{1:N} \sim  q(\x_0)$
        \While{\textit{iterations not terminated}}
        \State $\mathbf{w} \sim q(\mathbf{w})$, $L_\ell = 0$
        \For {$t= 1, 2, \ldots, T$}
        \State \cblue{Get $\tilde{f}(\x_{t-1}^{(i)})$ using Eq.~\eqref{Eq:variational_GP_poserior} with $\x_* = \x_{t-1}^{(i)}, \forall i$}
        \Statex \qquad\quad \hrulefill \textbf{ \textit{EnKF} }\hrulefill
        \State Get prediction samples using Eq.~\eqref{eq:ETGP_predictive}
        \State Get empirical moments $\bar{\mathbf{m}}_t, \bar{\mathbf{P}}_t$ using Eq.~\eqref{eq:empirical_mean_cov}
        \State Get Kalman gain: $\bar{\mathbf{G}}_t \!=\! \bar{\mathbf{P}}_t \bm{C}^{\top} (\bm{C} \bar{\mathbf{P}}_t \bm{C}^{\top} \!\!+\! \mathbf{R})^{-1}$  
        \State  Get updated samples using Eq.~\eqref{eq:EnKF_update}
        \State Evaluate the log-likelihood using Eq.~\eqref{eq:evaluation_log_likelihood_analytical}, and 
        $$
        L_\ell \gets  L_\ell + \log p(\y_t \vert \y_{1:t-1}, \mathbf{w}, \tilde{f}) 
        \vspace{-.1in}
        $$  
        \Statex \qquad\quad \hrulefill 
        \EndFor
        \State Evaluate $\mathcal{L}$ based on $L_\ell$ and Eq.~\eqref{eq:EnVI_lower_bound} 
        \State Maximize $\mathcal{L}$ and update $\bm{\vartheta}$, $\bzeta$ using Adam \cite{kingma2015adam}
        \EndWhile
        \Statex {\bf Output}: EnKF particles $\{\x_{0:T}^{(i)}\}_{i=1}^N$,  model parameters $\bm{\vartheta}$, and variational parameters $\bzeta$.
    \end{algorithmic}
\end{algorithm}

\begin{remark} 
\label{remark:efficiency}  
    The proposed ETGPSSM in Algorithm~\ref{alg:EnKF_version2} improves upon existing GPSSMs in several key aspects of efficiency. First, it reduces the computational complexity from $\mathcal{O}(d_x M^3)$ to $\mathcal{O}(M^3)$ by relying on a single GP, while the added cost of a shallow neural network remains negligible. Second, by sharing the GP and flow network across dimensions, it avoids the quadratic scaling of variational parameters associated with independent GPs, as discussed in Section~\ref{subsec:challenges_GPSSMs}. Furthermore, by replacing explicit variational parameterization of the latent states with EnKF-based inference, the method significantly reduces parametric overhead. \cblue{Together, these design choices lead to substantial gain in both computational and modeling efficiency. The overall computational complexity is $\mathcal{O}(M^3 + d_x^3)$, where the $\mathcal{O}(d_x^3)$ term comes from the EnKF operations.

    It is important to note that in practice, $M$ is typically selected based on the complexity of the transition dynamics and the available computational budget, resulting in a trade-off between computation and modeling accuracy. Nevertheless, further investigation into how $M$ should scale with $d_x$ to preserve approximation accuracy in high-dimensional GPSSMs \cite{burt2020convergence} is warranted for future research.} 
\end{remark}



\begin{figure*}[t!]
    \centering

    \subfloat{\includegraphics[width=.99\textwidth]{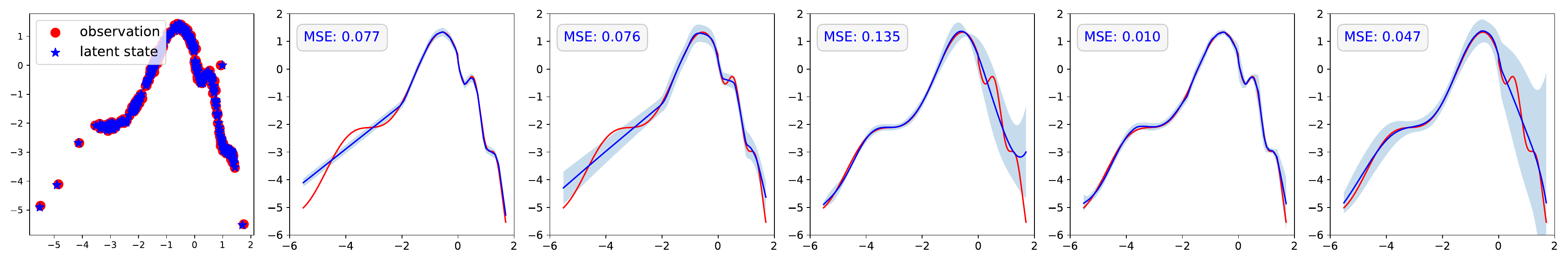} \label{subfig:variance_0}}
    
    \subfloat{\includegraphics[width=.99\textwidth]{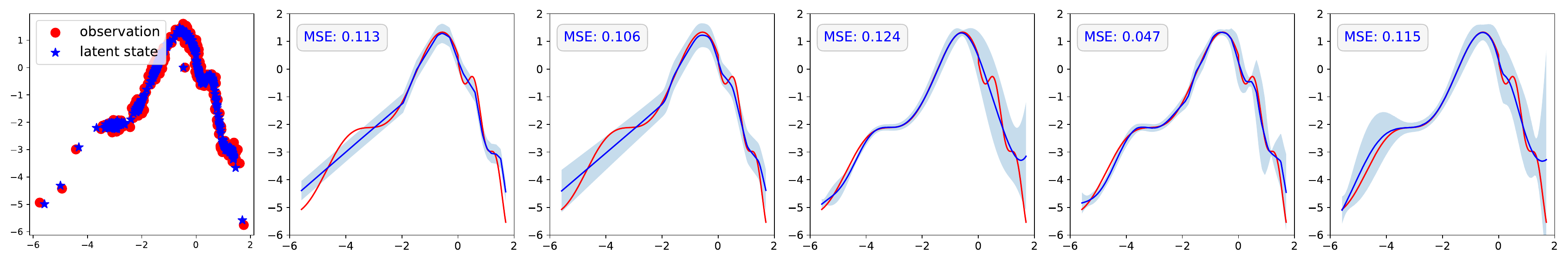} \label{subfig:variance_1}} 
    
    \subfloat{\includegraphics[width=.99\textwidth]{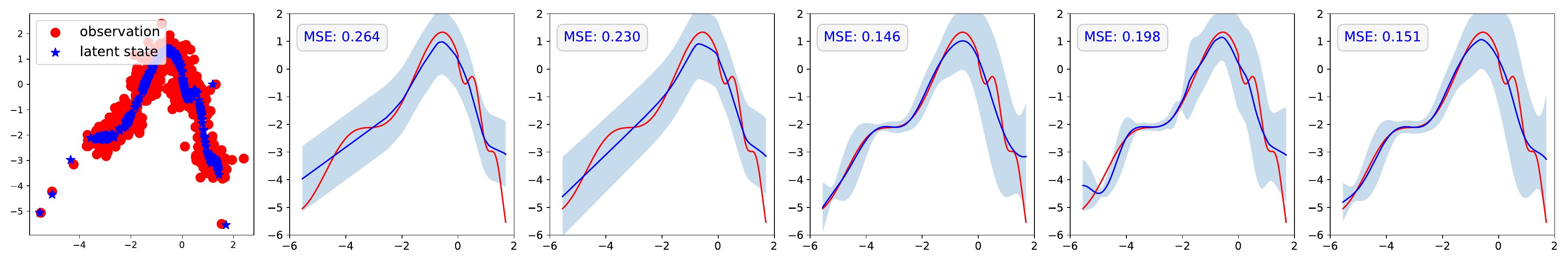} \label{subfig:variance_2}} 

    \begin{subfigure}{0.99\textwidth}
        \includegraphics[width=\textwidth]{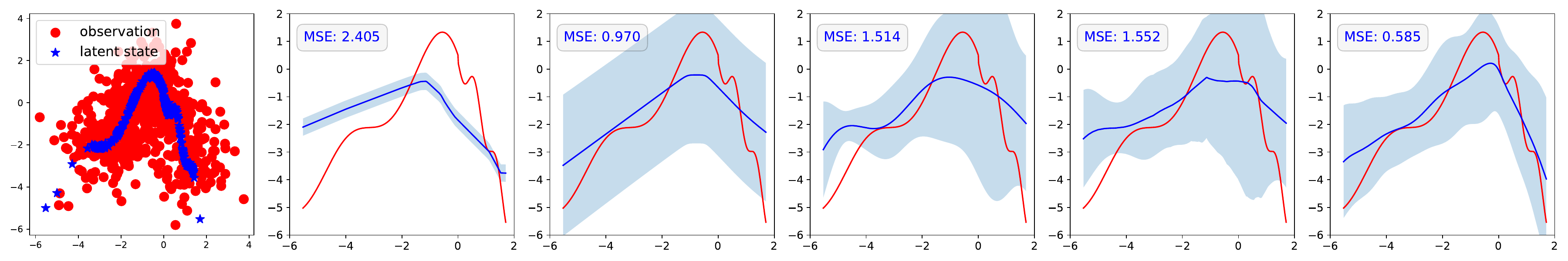} 
        \caption*{
        \hspace{.5in} Data  
        \hspace{.58in} AD-EnKF \cite{chen2022autodifferentiable}
        \hfill AD-EnKF \textsc{(bnn)}
        \hspace{.35in} EnVI \cite{lin2023ensemble}
        \hfill ETGPSSM \textsc{(dnn)}
        \hspace{.18in} ETGPSSM \textsc{(bnn)}
        \hfill
        }
        \label{subfig:variance_3}
    \end{subfigure}
    
    \caption{Non-stationary \textit{kink} transition function learning performance (mean $\pm$ $2 \sigma$) using various methods across different levels of emission noise ($\sigma_{\mathrm{R}}^2 \in \{0.0008, 0.008, 0.08, 0.8\}$, from top to bottom). The blue curve (\textcolor{blue}{\textbf{---}}) represents the learned mean function, while the red line (\cred{\textbf{---}}) indicates the true system transition function. \cblue{The shaded region depicts the total predictive uncertainty for the state transition, which includes the estimated transition noise level $(\pm 2 \, \hat{\sigma}_Q)$ for all methods, plus function uncertainty for probabilistic models (BNN/GP).}
    }
    \label{fig:kink_EnVI}
\end{figure*}

\section{Experiments} 
\label{sec:experiments}
In this section, we empirically evaluate the performance of ETGPSSM across a range of tasks. We begin by evaluating ETGPSSM’s ability to model non-stationary dynamical systems in Section~\ref{subsec:non_stationary_DSL}. We then assess its computational and parametric efficiency, as well as filtering accuracy, in chaotic high-dimensional settings in Section~\ref{subsec:Lorenz96_system}. Lastly, Section~\ref{subsec:time_series_prediction} highlights the model’s predictive performance across a range of real-world time series datasets.
Unless otherwise stated, all GPSSMs use the SE kernel \cite{williams2006gaussian}, with $20$ inducing points initialized at random. The shallow neural network used in ETGPSSM is a fully connected three-layer architecture with ReLU activations and hidden dimensions $128$ and $64$, mapping from $\mathbb{R}^{d_x}$ to the flow parameter space $\Theta \! \subseteq \! \mathbb{R}^{2d_x}$. \cblue{For further implementation details, we refer the reader to the supplementary material} and the public code repository.\footnote{\url{https://github.com/zhidilin/gpssmProj/tree/main/high_dim_GPSSM}} 

\subsection{Non-Stationary Dynamical System Learning}
\label{subsec:non_stationary_DSL}
This subsection evaluates the modeling capabilities of ETGPSSM on a non-stationary dynamical system, comparing its performance to state-of-the-art GPSSM and neural network-based methods. The underlying system is defined as:
\begin{subequations}
\label{eq:kink_function_dynamics}
    \begin{align}
        & \x_{t+1}  = f(\x_t) + \mathbf{v}_{t}, & \mathbf{v}_{t} \sim \cN(0, \sigma_{\mathrm{Q}}^2), \\
        & \y_t = \x_t + \mathbf{e}_t, & \mathbf{e}_t\sim \cN(0, \sigma_{\mathrm{R}}^2),
    \end{align}
\end{subequations}
where the \textit{non-stationary kink function} $f(\x)$ is defined as:
\begin{equation}
    f(\x) \!=\! \underbrace{\left[0.8 + (\x + 0.2) \left(1 - \frac{5}{1 + \exp(-2\x)}\right) \right]}_{\text{kink function}} s(\x) - o(\x)
\end{equation}
with a slope modulation:
\begin{equation}
s(\x) = 
\begin{cases}
    1 - 0.5 \exp(-0.5\x), & \x > 0, \\
    1, & \x \leq 0,
\end{cases}
\end{equation}
and an oscillatory component:
\begin{equation}
o(\x) = 
\begin{cases}
    0.5 \sin(8\x), & \x > 0, \\
    0.5 \sin(2\x), & \x \leq 0.
\end{cases}
\end{equation}
The kink function is a standard benchmark for evaluating GPSSMs \cite{frigola2014variational, ialongo2019overcoming, lin2023ensemble}, \cblue{and is readily modeled by a stationary GP with an SE kernel. Here, we break this stationarity by extending it with input-dependent slope and oscillatory terms, creating a non-stationary benchmark that presents a greater challenge for dynamical models (see Fig.~\ref{fig:kink_EnVI}).} 
We generate training data $\y_{1:T}$ by fixing $\sigma_\mathrm{Q}^2$ at $0.05$ and varying $\sigma_\mathrm{R}^2$ systematically across the set $\{0.0008,0.008,0.08,0.8\}$, producing four noise regimes. ETGPSSM is compared to two leading baselines: the neural network-based AD-EnKF \cite{chen2022autodifferentiable} and the GP-based EnVI \cite{lin2023ensemble}. \cblue{The AD-EnKF employs a fully-connected network with ReLU activations and hidden layers of dimensions 128 and 64, to model the state transition. Further implementation details (e.g., learning rate, optimizer) are provided in Supplement~\textcolor{blue}{B}.}
Consistent with previous works \cite{ialongo2019overcoming,lin2023ensemble}, all methods share the true emission model while learning the transition function, ensuring a fair comparison. The system dynamics learning results across different methods are shown in Fig.~\ref{fig:kink_EnVI}, where our method is represented in the last two columns. The suffixes ``\textsc{dnn}'' and ``\textsc{bnn}'' in brackets  indicate the network types—Bayesian neural network (BNN) or deep neural network (DNN)—used in our model (see Fig.~\ref{fig:ETGP}).

\begin{figure*}[t!]
    \centering
    \includegraphics[width=.97\linewidth]{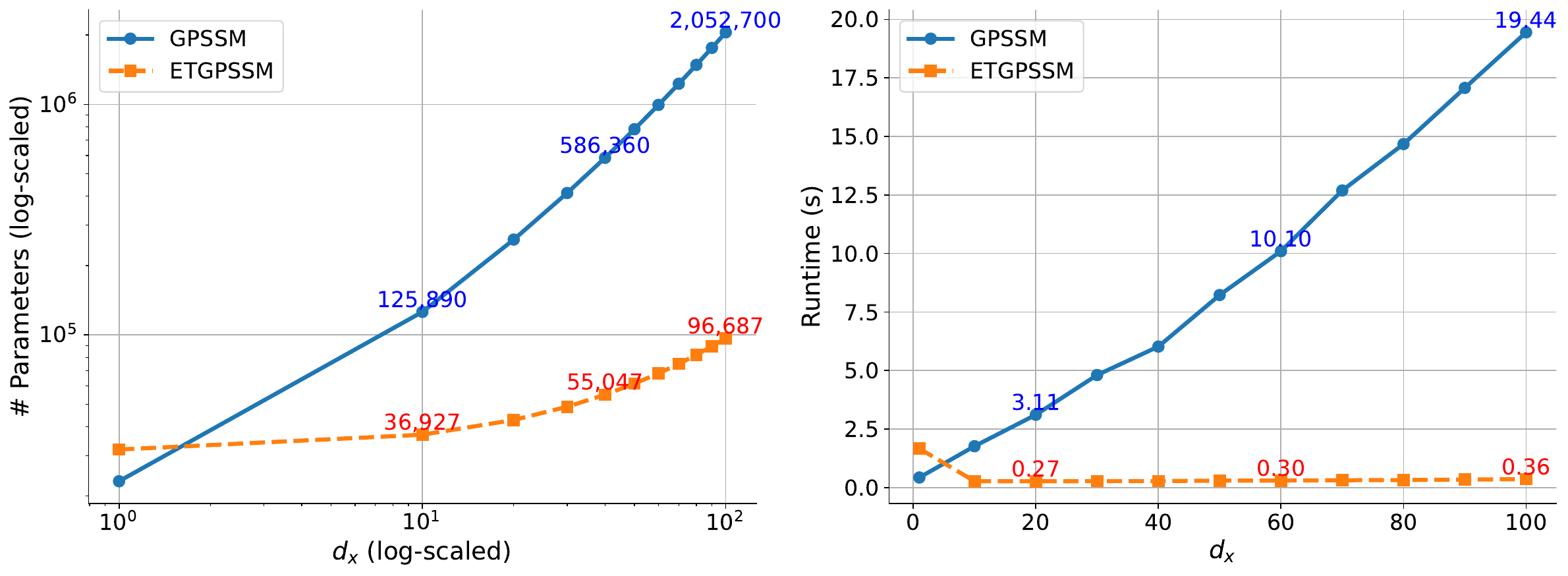}
    \caption{(\textbf{Left}) \cblue{Comparison of the total number of parameters (including variational and model parameters). In ETGPSSM, the count is dominated by the GP ($M d_x + 3 + M + M^2$) and the neural network with architecture $d_x \!\to\! 128 \!\to\! 64 \!\to\! 2d_x$ ($258\,d_x + 8384$), resulting in overall linear scaling in $d_x$. In GPSSM, the parameters are dominated by the $d_x$ independent GPs, ($M d_x^2 + (3 + M + M^2) d_x$), leading to quadratic scaling in $d_x$.} (\textbf{Right}) ETGPSSM maintains low computational costs as $d_x$ increases, while GPSSM exhibits linear growth.
    }
    \label{fig:parameters_time_comparison}
\end{figure*}

The results in Fig.~\ref{fig:kink_EnVI} show that, across all noise levels, ETGPSSM achieves consistently superior system identification performance. In the low-noise regimes($\sigma_{\mathrm{R}}^2 = \{0.0008, 0.008\}$), the DNN-based ETGPSSM excels at capturing both smooth and oscillatory regions of the transition function. By contrast, AD-EnKF struggles with the smooth left segment and performs well only on the right, highly nonlinear region. EnVI, the state-of-the-art GPSSM baseline, fails to capture the non-stationarity and underfits the high-frequency component.

Under high noise $(\sigma_{\mathrm{R}}^2 = \{0.08, 0.8\})$, performance declines across all methods due to increased latent state uncertainty. Nevertheless, ETGPSSM remains robust. Notably, in this scenario, BNN-based methods, particularly ETGPSSM (\textsc{bnn}), outperforms the DNN-based version, benefiting from regularization and better uncertainty quantification. These results emphasize the importance of modeling flexibility and demonstrate the advantage of combining neural flows with GP priors.

\subsection{High-Dimensional Latent State Estimation}
\label{subsec:Lorenz96_system}

In this subsection, we evaluate latent state estimation in the Lorenz-96 system, a standard benchmark for studying chaotic dynamics in weather and climate modeling \cite{lorenz1996predictability}. The system consists of \( d_x \in \mathbb{Z}_{+} \) coupled ordinary differential equations (ODEs):
\begin{equation}
    \frac{d\mathrm{x}_{d}}{dt} = (\mathrm{x}_{d+1} - \mathrm{x}_{d-2}) \mathrm{x}_{d-1} - \mathrm{x}_{d} + F, \ 1\le d\le d_x
\end{equation}
where $F=8$ induces fully chaotic behavior, with small differences in initial conditions leading to divergent trajectories \cite{lorenz1996predictability}. The dynamics model is discretized using the Euler method, and at each time step $t$, observations are generated through a linear emission model ($\bm{C} \!=\! \bm{I}$) with Gaussian additive noise, where the noise level is significant with the covariance given by $\mathbf{R} = 4 \bm{I}$.

We first assess the scalability of ETGPSSM in high-dimensional settings. \cblue{For simplicity, we focus exclusively on the \textsc{dnn} variant of ETGPSSM. The \textsc{bnn} variant achieves comparable performance but doubles the number of parameters by introducing variance parameters for each neural network weight. This observation aligns with prior work showing that, on sufficiently large datasets, the \textsc{bnn} variant offers no significant performance gain over its deterministic counterpart \cite{maronas2022efficient,maronas2021transforming}.} Figure~\ref{fig:parameters_time_comparison} shows the parameter count and runtime of ETGPSSM and standard GPSSMs, both using 100 inducing points. \cblue{As shown in the left panel, ETGPSSM exhibits linear growth in parameter count with increasing $d_x$, while GPSSM grows quadratically.} This is enabled by ETGPSSM’s use of a shared GP combined with normalizing flows, in contrast to GPSSM's reliance on independent GPs per dimension.

The right panel of Fig.~\ref{fig:parameters_time_comparison} illustrates the computational efficiency of ETGPSSM by comparing the one-sample execution time for transition evaluation (i.e., Eq.~\eqref{eq:ssm_transition}). Notably, ETGPSSM maintains consistently low running times across increasing state dimensions, whereas GPSSM exhibits rapid growth in computational cost. For instance, at $d_{x} = 100$, ETGPSSM completes the evaluation in $0.36$ seconds, significantly outperforming GPSSM, which requires $19.56$ seconds. This efficiency stems from the reduced computational complexity of ETGPSSM, which scales as $ \mathcal{O}(M^{3})$ due to its use of a single GP, in contrast to GPSSM’s $\mathcal{O}(d_{x}M^{3})$ complexity. These results demonstrate that ETGPSSM remains computationally and parametrically tractable in high-dimensional latent spaces.

\begin{figure*}[t!]
    \centering
    \includegraphics[width=0.99\linewidth]{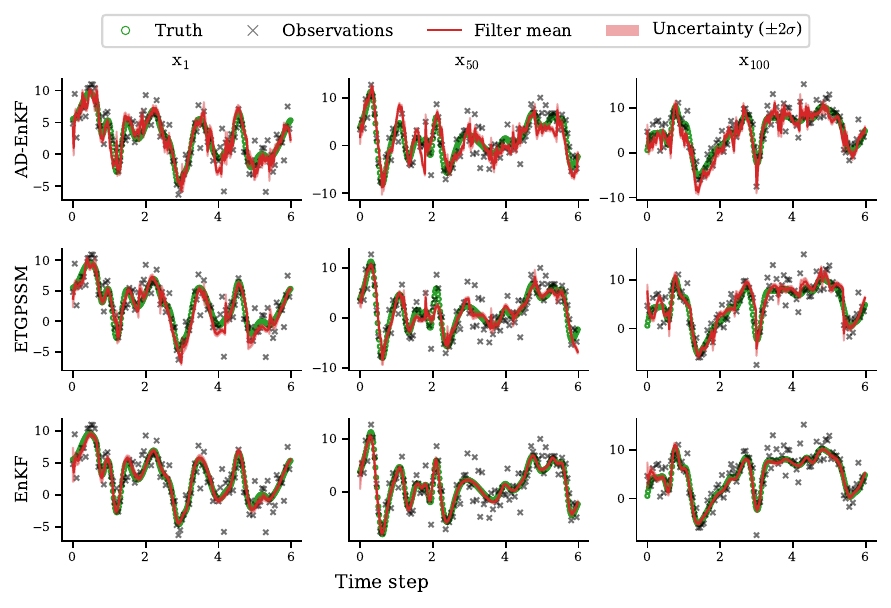}
    \vspace{-.1in}
    \caption{\cblue{Latent state estimation performance for a 100-dimensional Lorenz-96 system, with 3 dimensions sampled for visualization.  Root mean square error (RMSE) values for different methods are 0.4468 for EnKF, 1.1583 for ETGPSSM, and 1.4553 for AD-EnKF. RMSE between the true latent states and the observations is 1.9855, which serves as a baseline, indicating the error if observations were used directly as state estimates.}
    }
    \label{fig:filtering_performance}
\end{figure*}

\begin{table}[t!]
\centering
\caption{\cblue{Comparison of filtering performance in the pendulum system.}}
\cblue{
\adjustbox{valign=c, width=0.98\linewidth}{
    \begin{tabular}{lcccccc}
    \toprule
    Method & \textsc{rmse ($\downarrow$)}  & \textsc{coverage} & \textsc{spread} & \textsc{crps ($\downarrow$)}  \\
    \midrule \midrule
    \textsc{ad-enkf}  & 1.4553 & 0.3007 & 0.2885 &  0.2681 \\
    \textsc{etgpssm} & 1.1583 & 0.3587 & 0.2774 &  0.2101 \\
    \textsc{enkf}     & 0.4468 & 0.5866 & 0.1741 &  0.0685 \\
    \bottomrule
    \end{tabular}
    }
}
\label{tab:Lorenz_96}
\end{table}

We next fix the dimension $d_{x} \!=\! 100$ in the Lorenz-96 system and generate observations of length $T \!=\! 600$ for training ETGPSSM and its deep neural network-based competitor, AD-EnKF \cite{chen2022autodifferentiable}. The EnVI \cite{lin2023ensemble}, which already caused out-of-memory errors\footnote{All experiments were conducted on a machine equipped with an Intel Xeon W7-3545 v6 processor (24 cores, 2.7 GHz), 128 GB of DDR5 RAM, and an NVIDIA RTX 4500 Ada GPU (24 GB VRAM), running Ubuntu 20.04 LTS.} at $d_x \!=\! 30$ with an ensemble size $N \!=\! 150$, is excluded due to its prohibitive computational cost in this high-dimensional setting.  The corresponding state inference performance after training is summarized in Fig.~\ref{fig:filtering_performance} \cblue{and Table~\ref{tab:Lorenz_96}. Table~\ref{tab:Lorenz_96} presents four standard filtering metrics: RMSE, spread, coverage probability, and CRPS \cite{spantini2022coupling,gneiting2007probabilistic}. Detailed definitions of these metrics are provided in Supplement~\textcolor{blue}{B}.} For comparative analysis, the performance of EnKF is also included.

\cblue{As shown in Fig.~\ref{fig:filtering_performance} and Table~\ref{tab:Lorenz_96}, ETGPSSM achieves superior performance compared to AD-EnKF in both mean estimation (see e.g. \textsc{RMSE}) and uncertainty quantification (see e.g. \textsc{CRPS}). Notably, with the observation RMSE of 1.9855 serving as a baseline, ETGPSSM demonstrates substantial error reduction relative to the raw observational data.} This result highlights the superior filtering performance of ETGPSSM in high-dimensional chaotic systems.
While EnKF achieves the best filtering performance, e.g., lowest RMSE, it assumes full knowledge of the dynamics and performs no learning. In contrast, ETGPSSM simultaneously learns the system and infers latent states, offering a more general and powerful solution for complex, high-dimensional systems.

\subsection{Real-World Time Series Prediction}
\label{subsec:time_series_prediction}
In this subsection, we evaluate the prediction performance of ETGPSSM on several real-world time series datasets\footnote{\url{https://homes.esat.kuleuven.be/~smc/daisy/daisydata.html}} commonly used for system identification (see Table~\ref{tab:systemidentifcation}). 
In every dataset, the first half of the sequence is utilized as training data, with the remaining portion designated for testing. Standardization of all datasets is conducted based on the training sequence. Table~\ref{tab:systemidentifcation} reports the series prediction results, wherein the RMSE is averaged over 50-step ahead forecasting.
We compare ETGPSSM against state-of-the-art methods, including neural network-based approaches—DKF \cite{krishnan2017structured} and AD-EnKF \cite{chen2022autodifferentiable}—as well as various GPSSMs: vGPSSM \cite{eleftheriadis2017identification}, CO-GPSSM \cite{lin2023towards}, PRSSM \cite{doerr2018probabilistic}, ODGPSSM \cite{lin2022output}, VCDT \cite{ialongo2019overcoming}, and EnVI \cite{lin2023ensemble}. Following previous works, the latent space dimension is set to $d_x = 4$ for all the methods. 

\begin{table*}[ht!]
    \centering
    \caption{The 50-step ahead prediction performance in terms of RMSE of different models (all with $d_x = 4$) on system identification datasets.  Mean and standard deviation of the prediction results are shown as an averaged result over ten different seeds. The three lowest RMSE values are highlighted in shades of green, with a deeper one indicating lower RMSE.} 
    \setlength{\tabcolsep}{4.2mm}{
        \centering
        \begin{tabular}{r | ccccc }
            \toprule
            Method & \multicolumn{1}{c}{Actuator} &  \multicolumn{1}{c}{Ball Beam} &  \multicolumn{1}{c}{Drive} &  \multicolumn{1}{c}{Dryer} &  \multicolumn{1}{c}{Gas Furnace}\\
            \midrule \midrule
            \textbf{DKF} \cite{krishnan2017structured}
            & $1.204 \pm 0.250$   
            & $0.144 \pm 0.005$  
            & $0.735 \pm 0.001$   
            & $1.465 \pm 0.087$   
            & $5.589 \pm 0.066$    \\
            \textbf{AD-EnKF} \cite{chen2022autodifferentiable}
            &  $0.705 \pm 0.117$  
            &  $0.057 \pm 0.006$
            &  $0.756 \pm 0.114$ 
            &  $0.182 \pm 0.053$ 
            &  $1.408 \pm 0.090$\\ 
            \textbf{AD-EnKF} (\textsc{bnn})
            & $0.705 \pm 0.088$
            & \cellcolor[rgb]{0.635, 0.843, 0.694}$0.053 \pm 0.007$
            & $0.896 \pm 0.088$
            & $0.155 \pm 0.030$
            & \cellcolor[rgb]{0.847, 0.929, 0.875}$1.361 \pm 0.061$\\
            \midrule \midrule
            \textbf{vGPSSM}  \cite{eleftheriadis2017identification}
            &  $1.640 \pm 0.011$ 
            &  $0.268 \pm 0.414$
            &  $0.740 \pm 0.010$
            &  $0.822 \pm 0.002$
            &  $3.676 \pm 0.145$   \\
            \textbf{CO-GPSSM} \cite{lin2023towards}
            &  $0.803 \pm 0.011$
            &  $0.079 \pm 0.018$
            &  $0.736 \pm 0.007$
            &  $0.366 \pm 0.146$
            &  $1.898 \pm 0.157$ \\
            \textbf{PRSSM}  \cite{doerr2018probabilistic}
            &  $0.691 \pm 0.148$
            &  $0.074 \pm 0.010$
            &  \cellcolor[rgb]{0.388, 0.745, 0.482}$0.647 \pm 0.057$
            &  $0.174 \pm 0.013$  
            &  $1.503 \pm 0.196$   \\ 
            \textbf{ODGPSSM}  \cite{lin2022output}
            &  \cellcolor[rgb]{0.847, 0.929, 0.875}$0.666 \pm 0.074$
            &  $0.068 \pm 0.006$
            &  $0.708 \pm 0.052$ 
            &  $0.171 \pm 0.011$  
            &  $1.704 \pm 0.560$ \\
            \textbf{VCDT}  \cite{ialongo2019overcoming}
            &  $0.815 \pm 0.012$
            &  $0.065 \pm 0.005$
            &  $0.735 \pm 0.005$
            &  $0.667 \pm 0.266$
            &  $2.052 \pm 0.163$ \\
            \textbf{EnVI} \cite{lin2023ensemble}  
            &  \cellcolor[rgb]{0.635, 0.843, 0.694}$0.657 \pm 0.095$
            &  \cellcolor[rgb]{0.847, 0.929, 0.875}$0.055 \pm 0.002$
            &  \cellcolor[rgb]{0.847, 0.929, 0.875}$0.703 \pm 0.050$
            &  \cellcolor[rgb]{0.388, 0.745, 0.482}$0.125 \pm 0.017$
            &  $1.388 \pm 0.123$ \\
            \midrule \midrule
            \textbf{ETGPSSM} (\textsc{dnn})  
            & \cellcolor[rgb]{0.388, 0.745, 0.482}$0.646 \pm 0.081$
            & \cellcolor[rgb]{0.388, 0.745, 0.482}$0.050 \pm 0.003$
            & \cellcolor[rgb]{0.635, 0.843, 0.694}$0.668 \pm 0.092$
            & \cellcolor[rgb]{0.635, 0.843, 0.694}$0.137 \pm 0.030$
            & \cellcolor[rgb]{0.388, 0.745, 0.482}$1.300 \pm 0.052$\\
            \textbf{ETGPSSM} (\textsc{bnn})  
            &  \cellcolor[rgb]{0.388, 0.745, 0.482}$0.646 \pm 0.095$
            &  \cellcolor[rgb]{0.635, 0.843, 0.694}$0.053 \pm 0.002$
            &  \cellcolor[rgb]{0.847, 0.929, 0.875}$0.703 \pm 0.028$
            &  \cellcolor[rgb]{0.847, 0.929, 0.875}$0.154 \pm 0.026$
            &  \cellcolor[rgb]{0.635, 0.843, 0.694}$1.313 \pm 0.048$\\
            \bottomrule
    \end{tabular}}
    \label{tab:systemidentifcation}
\end{table*}

As reported in Table~\ref{tab:systemidentifcation}, ETGPSSM demonstrates strong overall performance. On the \textit{Actuator} dataset, both ETGPSSM (\textsc{dnn}) and ETGPSSM (\textsc{bnn}) achieve an RMSE of $0.646$, outperforming EnVI ($0.657$) and significantly improving upon AD-EnKFs, which yield an RMSE of $0.705$. Similar trends are observed on the \textit{Ball Beam} and \textit{Gas Furnace} datasets, where ETGPSSM ranks first and second, respectively, indicating effective modeling of system dynamics. On the \textit{Drive} and \textit{Dryer} datasets, however, GPSSMs such as PRSSM and EnVI attain slightly better performance. This is likely due to the relatively low latent state dimension ($d_x = 4$), where conventional GPSSMs, including PRSSM, ODGPSSM, and EnVI, remain competitive, as discussed in \cite{lin2023ensemble}. Nonetheless, we highlight that although the primary contribution of the proposed ETGPSSM lies in its scalability and efficiency in high-dimensional settings, it also remains highly competitive in low-dimensional scenarios, achieving superior or very comparable performance.

In comparison to neural network-based methods like AD-EnKF, ETGPSSM’s hybrid design leverages the representational power of deep learning while retaining the uncertainty quantification and regularization benefits of GPs. This synergy proves especially effective for modeling complex, non-stationary dynamics. This is evident from the results, where neural network-based methods exhibit higher prediction errors. AD-EnKF (\textsc{bnn}) consistently outperforms its DNN counterpart, underscoring the robustness provided by Bayesian approaches. Still, both lag behind ETGPSSM in predictive accuracy across most datasets. In summary, ETGPSSM demonstrates strong predictive performance on diverse real-world time series, matching or outperforming prior methods, while offering a scalable and efficient solution applicable to both low- and high-dimensional dynamical systems.

\section{Conclusions} 
\label{sec:conclusions}
In this paper, we proposed a new framework for learning and inference in GPSSM tailored to non-stationary, high-dimensional dynamical systems.
Our central contribution is the introduction of the ETGP prior, which combines a shared GP with input-dependent normalizing flows parameterized by neural networks. This construction enables expressive, non-stationary modeling while reducing the computational and parametric overhead typical of high-dimensional GPSSMs.
To support scalable inference, we developed a variational framework that approximates the ETGP posterior by disentangling the GP and neural network components. The generic and assumption-free variational lower bound we derive requires no explicit latent state posterior parameterization, but instead uses EnKF for the inference, yielding additional computational efficiency.
Extensive experiments across system identification, high-dimensional filtering, and time series forecasting show that ETGPSSM consistently achieves strong performance while maintaining favorable computational efficiency.
%
\appendices

\section{Proof of Corollary~\ref{corollary:dependent_nonstationary_GP}}
\label{app:proof_corollary}
\begin{proof}
    Similar proof of this corollary has been discussed in the literature; see \cite{maronas2022efficient}. For ease of reference, we also detail here. 
    Consider two arbitrary time steps \(t\) and \(t^\prime\), we have 
    \begin{align}
        & \bm{f}(\x_{t-1}) = \bm{\alpha}_{t} \cdot \tilde{f}(\x_{t-1}) + \bm{\beta}_{t},\\
        & \bm{f}(\x_{t^\prime-1}) = \bm{\alpha}_{t^\prime} \cdot \tilde{f}(\x_{t^\prime-1}) + \bm{\beta}_{t^\prime}.
    \end{align}
    \cblue{Since the parameters $\bm{\theta}_t$ and $\bm{\theta}_{t^\prime}$ are deterministic when conditioned on the inputs $\mathbf{x}_t$ and $\mathbf{x}_{t^\prime}$, the joint distribution of \(\bm{f}(\x_{t-1})\) and \(\bm{f}(\x_{t^\prime-1})\) is Gaussian with mean $[\bm{\beta}_t^\top, \bm{\beta}_{t^\prime}^\top]^\top$.}
    The covariance between \(\bm{f}(\x_{t-1})\) and \(\bm{f}(\x_{t^\prime-1})\) is:
    \begin{align}
        \bm{\Lambda}_{t, t^\prime} & = \bm{\alpha}_{t} \cdot \text{Cov}(\tilde{f}(\x_{t-1}), \tilde{f}(\x_{t^\prime-1})) \cdot \bm{\alpha}_{t^\prime}^\top\\
        & = k(\x_{t-1}, \x_{t^\prime-1}) \cdot \bm{\alpha}_{t} \bm{\alpha}_{t^\prime}^\top.
    \end{align} 
    Similarly, the covariance of $\bm{f}(\x_{t-1})$ with itself is:
    \begin{align}
        \text{Cov}(\bm{f}(\x_{t-1}), \bm{f}(\x_{t-1})) = k(\x_{t-1}, \x_{t-1}) \cdot \bm{\alpha}_{t} \bm{\alpha}_{t}^\top.
    \end{align}
    The same logic applies to $\bm{f}(\x_{t^\prime-1})$.
    Therefore, the covariance matrix $\bm{\Lambda}_{t, t^\prime}$ is:
    \begin{equation} 
        \begin{bmatrix}
            k(\x_{t-1}, \x_{t-1}) \cdot \bm{\alpha}_t \bm{\alpha}_t^\top & k(\x_{t-1}, \x_{t^\prime-1}) \cdot \bm{\alpha}_t \bm{\alpha}_{t^\prime}^\top\\
            k(\x_{t^\prime-1}, \x_{t-1}) \cdot \bm{\alpha}_{t^\prime} \bm{\alpha}_t^\top & k(\x_{{t^\prime}-1}, \x_{{t^\prime}-1}) \cdot \bm{\alpha}_{t^\prime} \bm{\alpha}_{t^\prime}^\top
        \end{bmatrix}.
    \end{equation}
\end{proof}

\section{Proof of Proposition~\ref{prop:bound_comparison}}
\label{appx:bounds_comparision}
\begin{proof}
Our target is to compare the two lower bounds on the log-evidence $\log p(\y_{1:T})$:
\begin{align}
\mathcal{L} &=  \mathbb{E}_{q(\x_0, \bm{f})} \left[ \log \frac{p(\y_{1:T}, \x_0, \bm{f})}{q(\x_0, \bm{f})} \right],\\
\mathcal{L}_2 &= \mathbb{E}_{q(\x_{0:T}, \bm{f})} \left[ \log \frac{p(\y_{1:T}, \x_{0:T}, \bm{f})}{q(\x_{0:T}, \bm{f})} \right]. 
\end{align}
Since
$$
q(\x_{0:T}, \bm{f}) = q(\x_0, \bm{f})q(\x_{1:T}\vert \bm{f}, \x_0),
$$
we have $\mathcal{L}_2=$
\begin{equation}
\begin{aligned}
& \mathbb{E}_{q(\x_{0:T}, \bm{f})} \left[ \log \frac{p(\y_{1:T}, \x_{0}, \bm{f}) p(\x_{1:T} \vert \x_{0}, \bm{f}, \y_{1:T})}{q(\x_0, \bm{f})q(\x_{1:T}\vert \bm{f}, \x_0)} \right]  \\
&\!=\!  \mathbb{E}_{q(\x_{0:T}, \bm{f})} \left[ \log \frac{p(\y_{1:T}, \x_0, \bm{f})}{q(\x_0)q(\bm{f})} + \log \frac{p(\x_{1:T} \vert  \bm{f}, \x_0, \y_{1:T})}{q(\x_{1:T} \vert \bm{f}, \x_0)} \right]    \\
&\!=\! \mathcal{L} - \mathbb{E}_{q(\bm{f}, \x_0)} \left[ \text{KL} \big(q(\x_{1:T} \vert \bm{f}, \x_0) \| p(\x_{1:T}\vert \bm{f}, \x_0, \y_{1:T})\big) \right] 
\end{aligned}
\end{equation}
Since KL divergence is non-negative:
\begin{equation}
 \text{KL} \big(q(\x_{1:T} \vert \bm{f}, \x_0) \| p(\x_{1:T}\vert \bm{f}, \x_0, \y_{1:T})\big) \geq 0 \nonumber
\end{equation}
with equality iff $q(\x_{1:T} \vert \bm{f}, \x_0) = p(\x_{1:T}\vert \bm{f}, \x_0, \y_{1:T})$.
Thus we have:
\begin{equation}
\mathcal{L} \geq \mathcal{L}_2. \nonumber
\end{equation}
Equality holds when $q(\x_{1:T} \vert \bm{f}, \x_0) = p(\x_{1:T}\vert \bm{f}, \x_0, \y_{1:T})$ for all $\bm{f}$ and $\x_0$.
\end{proof}

\section{Proposition~\ref{prop:approx_ELBO_}: ELBO Derivation}
\label{appex:ELBO_derivation}
\begin{proof}
We derive the ELBO $\mathcal{L}$ for our ETGPSSM by first considering a general stochastic process prior $\bm{f}$ over the transition function (Eq.~\eqref{eq:ELBO_general}), then specializing to our ETGP construction. The derivation proceeds as follows: 
\begin{subequations}
\begin{align}
    \mathcal{L} & = \mathbb{E}_{q} \left[ \log \frac{p( \y_{1:T}, \x_{0}, \mathbf{w}, \tilde{f})}{q(\x_{0}, \mathbf{w}, \tilde{f})}\right], \\
    & = \mathbb{E}_{q} \left[ \log \frac{p(\y_{1:T} \vert \x_0, \mathbf{w}, \tilde{f}) p(\x_0) p(\mathbf{w}) p(\tilde{f})}{q(\x_0) q(\mathbf{w}) q(\tilde{f}) }\right]\\
    & = \mathbb{E}_{q} \left[ \log \frac{p(\y_{1:T} \vert \x_0, \mathbf{w}, \tilde{f}) p(\x_0) p(\mathbf{w}) p(\tilde{f} \vert \u) p(\u)}{q(\x_0) q(\mathbf{w}) p(\tilde{f} \vert \u) q(\u) }\right] \label{eq:sparse_GP_using_ELBO}\\
    & = \mathbb{E}_{q}\left[\log p(\y_{1:T} \vert \x_0, \mathbf{w}, \tilde{f})\right] - \operatorname{KL}(q(\u) \| p(\u)) \nonumber\\
    & \quad - \operatorname{KL}(q(\mathbf{w}) \| p(\mathbf{w}))- \operatorname{KL}(q(\x_0) \| p(\x_0)),\\
    & = \mathbb{E}_{q}\left[ \sum_{t=1}^T \log p(\y_{t} \vert \y_{1:t-1}, \mathbf{w}, \tilde{f})\right] - \operatorname{KL}(q(\u) \| p(\u))  \nonumber\\
    & \quad - \operatorname{KL}(q(\mathbf{w}) \| p(\mathbf{w}))- \operatorname{KL}(q(\x_0) \| p(\x_0)), \label{eq:summation_likelihood}
\end{align}
\end{subequations}
where we augment the GP using inducing points and use the variational sparse GP in Eq.~\eqref{eq:sparse_GP_using_ELBO}; We factorize the joint likelihood into sequential terms in Eq.~\eqref{eq:summation_likelihood}, where for $t=1$ we have the log-term as $\log p(\y_1 \vert \x_0, \mathbf{w}, \tilde{f})$.
\end{proof}

\bibliographystyle{IEEEtran}
\bibliography{main.bib}

@inproceedings{ma2019variational,
  title={Variational implicit processes},
  author={Ma, Chao and Li, Yingzhen and Hern{\'a}ndez-Lobato, Jos{\'e} Miguel},
  booktitle = {Proc. Int. Conf. Mach. Learn. (ICML)},
  pages={4222--4233},
  year={2019},
}

@article{suwandi2023sparsityaware,
        title={Sparsity-Aware Distributed Learning for {G}aussian Processes With Linear Multiple Kernel},
        author={Suwandi, Richard Cornelius and Lin, Zhidi and Yin, Feng and Wang, Zhiguo and Theodoridis, Sergios},
        journal   = {IEEE Trans. Neural Netw. Learn. Syst.},
        year={2025},
        month = jan,
        pages={1--15}
}

@PhdThesis{mchutchon2015nonlinear,
  author = {McHutchon, Andrew James},
  school = {University of Cambridge},
  title  = {{Nonlinear modelling and control using Gaussian processes}},
  year   = {2014},
}

@PhdThesis{frigola2015bayesian,
  author = {Frigola, Roger},
  school = {University of Cambridge},
  title  = {Bayesian time series learning with {G}aussian processes},
  year   = {2015},
}

@article{ghosh2024danse,
  title={{DANSE}: Data-driven non-linear state estimation of model-free process in unsupervised learning setup},
  author={Ghosh, Anubhab and Honor{\'e}, Antoine and Chatterjee, Saikat},
  journal   = {IEEE Trans. Signal Process.},
  pages = {1824--1838},
  year={2024},
  publisher={IEEE}
}

@inproceedings{
gu2023mamba,
title={Mamba: Linear-Time Sequence Modeling with Selective State Spaces},
author={Albert Gu and Tri Dao},
booktitle={Proc. First Conference on Language Modeling},
year={2024},
month = oct,
}

@inproceedings{lorenz1996predictability,
  title={Predictability: A problem partly solved},
  author={Lorenz, Edward N},
  booktitle={Proc. Seminar on predictability},
  volume={1},
  number={1},
  pages={1--18},
  year={1996},
  organization={Reading}
}

@article{girin2021dynamical,
  title={Dynamical Variational Autoencoders: A Comprehensive Review},
  author={Girin, Laurent and Leglaive, Simon and Bie, Xiaoyu and Diard, Julien and Hueber, Thomas and Alameda-Pineda, Xavier},
  journal={Found. Trends Mach. Learn.},
  volume={15},
  number={1-2},
  pages={1--175},
  year={2021},
  publisher={Now Publishers Boston—Delft}
}

@article{gneiting2007probabilistic,
  title={Probabilistic forecasts, calibration and sharpness},
  author={Gneiting, Tilmann and Balabdaoui, Fadoua and Raftery, Adrian E},
  journal={J. R. Stat. Soc. Ser. B Stat. Method.},
  volume={69},
  number={2},
  pages={243--268},
  year={2007},
  publisher={Oxford University Press}
}

@article{spantini2022coupling,
  title={Coupling techniques for nonlinear ensemble filtering},
  author={Spantini, Alessio and Baptista, Ricardo and Marzouk, Youssef},
  journal={SIAM Review},
  volume={64},
  number={4},
  pages={921--953},
  year={2022},
  publisher={SIAM}
}

@article{burt2020convergence,
  title={Convergence of sparse variational inference in {G}aussian processes regression},
  author={Burt, David R and Rasmussen, Carl Edward and Van Der Wilk, Mark},
  journal={J. Mach. Learn. Res.},
  volume={21},
  number={131},
  pages={1--63},
  year={2020}
}

@inproceedings{lin2023towards_efficient,
  title={Towards Efficient Modeling and Inference in Multi-Dimensional {G}aussian Process State-Space Models},
  author={Lin, Zhidi and Maro{\~n}as, Juan and Li, Ying and Yin, Feng and Theodoridis, Sergios},
  booktitle = {Proc. IEEE Int. Conf. Acoust. Speech Signal Process. (ICASSP)},
  pages={12881--12885},
  year={2024},
}

@article{tobar2015unsupervised,
  title={Unsupervised state-space modeling using reproducing kernels},
  author={Tobar, Felipe and Djuri{\'c}, Petar M and Mandic, Danilo P},
  journal   = {IEEE Trans. Signal Process.},
  volume={63},
  number={19},
  pages={5210--5221},
  year={2015},
  publisher={IEEE}
}

@article{lin2023towards,
  title={Towards Flexibility and Interpretability of {G}aussian Process State-Space Model},
  author={Lin, Zhid and Yin, Feng and Maro\~{n}as, Juan},
  journal={arXiv preprint arXiv:2301.08843},
  year={2023},
}

@inproceedings{lindinger2022laplace,
  title={Laplace approximated {Gaussian} process state-space models},
  author={Lindinger, Jakob and Rakitsch, Barbara and Lippert, Christoph},
  booktitle={Proc. Conf. Uncertain. Artif. Intell. (UAI)},
  year={2022},
  pages={1199--1209},
  volume = 	 {180},
}

@inproceedings{paszke2019pytorch,
  title={PyTorch: an imperative style, high-performance deep learning library},
  author={Paszke, Adam and Gross, Sam and Massa, Francisco and Lerer, Adam and Bradbury, James and Chanan, Gregory and Killeen, Trevor and Lin, Zeming and Gimelshein, Natalia and Antiga, Luca and others},
  booktitle={Proc. Adv. Neural Inf. Process. Syst. (NeurIPS)},
  pages={8026--8037},
  year={2019}
}

@InProceedings{curi2020structured,
  author    = {Curi, Sebastian and Melchior, Silvan and Berkenkamp, Felix and Krause, Andreas},
  booktitle = {Proc. Learning for Dynamics and Control (L4DC)},
  title     = {Structured Variational Inference in Partially Observable Unstable {G}aussian Process State Space Models},
  year      = {2020},
  pages     = {147--157},
}

@inproceedings{deisenroth2009analytic,
  title={Analytic moment-based {G}aussian process filtering},
  author={Deisenroth, Marc Peter and Huber, Marco F and Hanebeck, Uwe D},
  booktitle = {Proc. Int. Conf. Mach. Learn. (ICML)},
  pages={225--232},
  year={2009}
}

@InProceedings{doerr2018probabilistic,
  author    = {Doerr, Andreas and Daniel, Christian and Schiegg, Martin and Duy, Nguyen-Tuong and Schaal, Stefan and Toussaint, Marc and Sebastian, Trimpe},
  booktitle = {Proc. Int. Conf. Mach. Learn. (ICML)},
  title     = {Probabilistic Recurrent State-Space Models},
  year      = {2018},
  pages     = {1280--1289},
}

@InProceedings{eleftheriadis2017identification,
  author    = {Eleftheriadis, Stefanos and Nicholson, Tom and Deisenroth, Marc Peter and Hensman, James},
  booktitle = {Proc. Adv. Neural Inf. Process. Syst. (NeurIPS)},
  title     = {Identification of {Gaussian} Process State Space Models},
  year      = {2017},
  pages     = {5309--5319},
}

@InProceedings{frigola2014variational,
  author    = {Frigola, Roger and Chen, Yutian and Rasmussen, Carl E},
  booktitle = {Proc. Adv. Neural Inf. Process. Syst. (NeurIPS)},
  title     = {{Variational Gaussian process state-space models}},
  year      = {2014},
  pages     = {3680--3688},
}

@InProceedings{frigola2013bayesian,
  author    = {Frigola, Roger and Lindsten, Fredrik and Sch{\"o}n, Thomas B and Rasmussen, Carl E},
  booktitle = {Proc. Adv. Neural Inf. Process. Syst. (NeurIPS)},
  title     = {{Bayesian inference and learning in Gaussian process state-space models with particle MCMC}},
  year      = {2013},
  pages     = {3156-3164},
}

@InProceedings{hensman2013gaussian,
  author    = {Hensman, James and Fusi, Nicol{\`o} and Lawrence, Neil D},
  booktitle = {Proc. Conf. Uncertain. Artif. Intell. (UAI)},
  title     = {Gaussian processes for Big data},
  year      = {2013},
  pages     = {282--290},
}

@Article{berntorp2021online,
  author    = {Berntorp, Karl},
  journal   = {Automatica},
  title     = {{Online Bayesian inference and learning of Gaussian-process state--space models}},
  year      = {2021},
  pages     = {109613},
  volume    = {129},
  publisher = {Elsevier},
}

@inproceedings{berntorp2023constrained,
  title={Constrained {G}aussian-Process State-Space Models for Online Magnetic-Field Estimation},
  author={Berntorp, Karl and Menner, Marcel},
  booktitle = {Proc. Int. Conf. Inf. Fusion (FUSION)},
  pages={1--7},
  year={2023},
}

@article{khan2008distributing,
  title={Distributing the {K}alman filter for large-scale systems},
  author={Khan, Usman A and Moura, Jos{\'e} MF},
  journal   = {IEEE Trans. Signal Process.},
  volume={56},
  number={10},
  pages={4919--4935},
  year={2008},
  publisher={IEEE}
}

@InProceedings{ialongo2019overcoming,
  author    = {Ialongo, Alessandro Davide and van der Wilk, Mark and Hensman, James and Rasmussen, Carl Edward},
  booktitle = {Proc. Int. Conf. Mach. Learn. (ICML)},
  title     = {{Overcoming mean-field approximations in recurrent Gaussian process models}},
  year      = {2019},
  pages     = {2931--2940},
}

@Article{ko2011learning,
  author    = {Ko, Jonathan and Fox, Dieter},
  journal   = {Auton. Robots},
  title     = {{Learning GP-BayesFilters via Gaussian process latent variable models}},
  year      = {2011},
  number    = {1},
  pages     = {3--23},
  volume    = {30},
  publisher = {Springer},
}

@Article{kobyzev2020normalizing,
  author    = {Kobyzev, Ivan and Prince, Simon JD and Brubaker, Marcus A},
  journal   = {IEEE Trans. Pattern Anal. Mach. Intell.},
  title     = {{Normalizing flows: An introduction and review of current methods}},
  year      = {2020},
  number    = {11},
  pages     = {3964--3979},
  volume    = {43},
  publisher = {IEEE},
}

@InProceedings{krishnan2017structured,
  author    = {Krishnan, Rahul and Shalit, Uri and Sontag, David},
  booktitle = {Proc. AAAI Conf. Artif. Intell. (AAAI)},
  title     = {Structured inference networks for nonlinear state space models},
  year      = {2017},
  pages     = {2101--2109},
}

@InProceedings{maronas2021transforming,
  author    = {Maro{\~n}as, Juan and Hamelijnck, Oliver and Knoblauch, Jeremias and Damoulas, Theodoros},
  booktitle = {Proc. Int. Conf. Artif. Intell. Stat. (AISTATS)},
  title     = {{Transforming Gaussian processes with normalizing flows}},
  year      = {2021},
  pages     = {1081--1089},
}

@article{evensen1994sequential,
  title={{Sequential data assimilation with a nonlinear quasi-geostrophic model using Monte Carlo methods to forecast error statistics}},
  author={Evensen, Geir},
  journal={J. Geophys. Res. Oceans},
  volume={99},
  number={C5},
  pages={10143--10162},
  year={1994},
  publisher={Wiley Online Library}
}

@Article{rios2019compositionally,
  author    = {Rios, Gonzalo and Tobar, Felipe},
  journal   = {Neural Netw.},
  title     = {{Compositionally-warped Gaussian processes}},
  year      = {2019},
  pages     = {235--246},
  volume    = {118},
  month = oct, 
  publisher = {Elsevier},
}

@Book{sarkka2013bayesian,
  title={Bayesian filtering and smoothing},
  author={S{\"a}rkk{\"a}, Simo and Svensson, Lennart},
  volume={17},
  year={2023},
  publisher={Cambridge university press}
}

@Book{theodoridis2020machine,
  author    = {Theodoridis, Sergios},
  publisher = {Academic Press},
  title     = {Machine Learning: {A Bayesian} and Optimization Perspective},
  year      = {2020},
  edition   = {2nd},
}

@InProceedings{titsias2009variational,
  author    = {Titsias, Michalis},
  booktitle = {Proc. Int. Conf. Artif. Intell. Stat. (AISTATS)},
  title     = {Variational Learning of Inducing Variables in Sparse {Gaussian} Processes},
  year      = {2009},
  pages     = {567--574},
}

@Article{wang2007gaussian,
  author    = {Wang, Jack M and Fleet, David J and Hertzmann, Aaron},
  journal   = {IEEE Trans. Pattern Anal. Mach. Intell.},
  title     = {{Gaussian process dynamical models for human motion}},
  year      = {2007},
  number    = {2},
  pages     = {283--298},
  volume    = {30},
  publisher = {IEEE},
}

@Book{williams2006gaussian,
  author    = {Rasmussen, Carl Edward and Williams, Christopher K. I.},
  publisher = {MIT Press},
  title     = {Gaussian Processes for Machine Learning},
  year      = {2006},
}

@InProceedings{xie2020learning,
  author    = {Xie, Ang and Yin, Feng and Ai, Bo and Zhang, Sha and Cui, Shuguang},
  booktitle = {Proc. Int. Conf. Inf. Fusion (FUSION)},
  title     = {{Learning while tracking: A practical system based on variational Gaussian process state-space model and smartphone sensory data}},
  year      = {2020},
  pages     = {1--7},
}

@Article{revach2022kalmannet,
  author    = {Revach, Guy and Shlezinger, Nir and Ni, Xiaoyong and Escoriza, Adria Lopez and Van Sloun, Ruud JG and Eldar, Yonina C},
  journal   = {IEEE Trans. Signal Process.},
  title     = {{KalmanNet: Neural network aided Kalman filtering for partially known dynamics}},
  year      = {2022},
  pages     = {1532--1547},
  volume    = {70},
  publisher = {IEEE},
}

@Article{kingma2019introduction,
  author    = {Kingma, Diederik P and Welling, Max},
  journal   = {Found. Trends Mach. Learn.},
  title     = {An introduction to variational autoencoders},
  year      = {2019},
  number    = {4},
  pages     = {307--392},
  volume    = {12},
  publisher = {Now Publishers, Inc.},
}

@Article{kullberg2021online,
  author    = {Kullberg, Anton and Skog, Isaac and Hendeby, Gustaf},
  journal   = {IEEE Trans. Signal Process.},
  title     = {Online Joint State Inference and Learning of Partially Unknown State-Space Models},
  year      = {2021},
  pages     = {4149--4161},
  volume    = {69},
  publisher = {IEEE},
}

@article{arulkumaran2017deep,
  title={Deep reinforcement learning: A brief survey},
  author={Arulkumaran, Kai and Deisenroth, Marc Peter and Brundage, Miles and Bharath, Anil Anthony},
  journal={IEEE Signal Process. Mag.},
  volume={34},
  number={6},
  pages={26--38},
  year={2017},
  publisher={IEEE}
}

@Article{deisenroth2013gaussian,
  author    = {Deisenroth, Marc Peter and Fox, Dieter and Rasmussen, Carl Edward},
  journal   = {IEEE Trans. Pattern Anal. Mach. Intell.},
  title     = {Gaussian processes for data-efficient learning in robotics and control},
  year      = {2013},
  number    = {2},
  pages     = {408--423},
  volume    = {37},
  publisher = {IEEE},
}

@Article{deisenroth2011robust,
  author    = {Deisenroth, Marc Peter and Turner, Ryan Darby and Huber, Marco F and Hanebeck, Uwe D and Rasmussen, Carl Edward},
  journal   = {IEEE Trans. Autom. Control},
  title     = {{Robust filtering and smoothing with Gaussian processes}},
  year      = {2012},
  number    = {7},
  pages     = {1865--1871},
  volume    = {57},
  publisher = {IEEE},
}

@Article{ko2009gp,
  author    = {Ko, Jonathan and Fox, Dieter},
  journal   = {Auton. Robots},
  title     = {{GP-BayesFilters: Bayesian filtering using Gaussian process prediction and observation models}},
  year      = {2009},
  number    = {1},
  pages     = {75--90},
  volume    = {27},
  publisher = {Springer},
}

@inproceedings{svensson2016computationally,
  title={Computationally efficient {B}ayesian learning of {G}aussian process state space models},
  author={Svensson, Andreas and Solin, Arno and S{\"a}rkk{\"a}, Simo and Sch{\"o}n, Thomas},
  booktitle={Proc. Int. Conf. Artif. Intell. Stat. (AISTATS)},
  pages={213--221},
  year={2016},
}

@Article{svensson2017flexible,
  author    = {Svensson, Andreas and Sch{\"o}n, Thomas B},
  journal   = {Automatica},
  title     = {A flexible state--space model for learning nonlinear dynamical systems},
  year      = {2017},
  pages     = {189--199},
  volume    = {80},
  publisher = {Elsevier},
}

@inproceedings{fan2023free,
  title={Free-Form Variational Inference for {G}aussian Process State-Space Models},
  author={Fan, Xuhui and Bonilla, Edwin V and O’Kane, Terence and Sisson, Scott A},
  booktitle = {Proc. Int. Conf. Mach. Learn. (ICML)},
  pages={9603--9622},
  year={2023},
}

@article{chen2022autodifferentiable,
  title={Autodifferentiable ensemble {K}alman filters},
  author={Chen, Yuming and Sanz-Alonso, Daniel and Willett, Rebecca},
  journal={SIAM J. Math. Data Sci.},
  volume={4},
  number={2},
  pages={801--833},
  year={2022},
  publisher={SIAM}
}

@InProceedings{liu2020gpssm,
  author    = {Liu, Yuhao and Djuri{\'c}, Petar M},
  booktitle = {Proc. European Signal Proces. Conf. (EUSIPCO)},
  title     = {{Gaussian} Process State-Space Models with Time-Varying Parameters and Inducing Points},
  year      = {2021},
  pages     = {1462--1466},
}

@Article{yan2020gaussian,
  author    = {Yan, Zun and Cheng, Peng and Chen, Zhuo and Li, Yonghui and Vucetic, Branka},
  journal   = {IEEE Trans. Signal Process.},
  title     = {Gaussian process reinforcement learning for fast opportunistic spectrum access},
  year      = {2020},
  pages     = {2613--2628},
  volume    = {68},
  publisher = {IEEE},
}

@article{lin2023ensemble,
  title={Ensemble {K}alman Filtering Meets {G}aussian Process {SSM} for Non-Mean-Field and Online Inference},
  author={Lin, Zhidi and Sun, Yiyong and Yin, Feng and Thi{\'e}ry, Alexandre},
  journal={IEEE Trans. Signal Process.},
  volume={72},
  month = aug,
  pages = {4286--4301},
  year={2024}
}

@article{liu2023sequential,
  title={Sequential Estimation of {G}aussian Process-based Deep State-Space Models},
  author={Liu, Yuhao and Ajirak, Marzieh and Djuri{\'c}, Petar M},
  journal={IEEE Trans. Signal Process.},
  volume    = {71},
  pages = {2968--2980},
  year={2023},
  publisher={IEEE}
}

@InProceedings{dinh2017density,
  author    = {Dinh, Laurent and Bengio, Samy},
  booktitle = {Proc. Int. Conf. Learn. Represent. (ICLR)},
  title     = {{Density estimation using Real NVP}},
  year      = {2017},
}

@InProceedings{kingma2015adam,
  author    = {Kingma, Diederik P and Ba, Jimmy},
  booktitle = {Proc. Int. Conf. Learn. Represent. (ICLR)},
  title     = {Adam: A Method for Stochastic Optimization},
  year      = {2015},
}

@InProceedings{lin2022output,
  author  = {Lin, Zhidi and Cheng, Lei and Yin, Feng and Xu, Lexi and Cui, Shuguang},
  booktitle = {Proc. IEEE Int. Conf. Acoust. Speech Signal Process. (ICASSP)},
  title   = {Output-Dependent {G}aussian Process State-Space Model},
  year    = {2023},
  pages     = {1--5},
}

@inproceedings{maronas2022efficient,
  title={Efficient Transformed {G}aussian Processes for Non-Stationary Dependent Multi-class Classification},
  author={Maro{\~n}as, Juan and Hern{\'a}ndez-Lobato, Daniel},
  booktitle={Proc. Int. Conf. Mach. Learn. (ICML)},
  pages={24045--24081},
  year={2023},
}
 \renewcommand{\appendixname}{Supplement}
\appendices
\onecolumn 

\section{Discussions about ELBO Derivation} \label{supp:connenction_to_EnVI}
Following the GPSSM literature \cite{lin2023ensemble,ialongo2019overcoming}, an alternative derivation of the variational inference algorithm presented in this work can begin with the following ELBO, which explicitly defines the variational distribution over the latent states: 
\begin{equation} 
\label{eq:ELBO_general_supp} 
\mathcal{L}_{2} = \mathbb{E}_{q(\x_{0:T}, \bm{f}(\cdot))} \left[\log \frac{p(\y_{1:T}, \x_{0:T}, \bm{f}(\cdot))} {q(\x_{0:T}, \bm{f}(\cdot)) }\right].
\end{equation}
It turns out that $\mathcal{L}_2$ is a lower bound of $\mathcal{L}$ in Eq.~(\textcolor{red}{22}) as summarized in Proposition~\textcolor{red}{1} in the main text.
Next, we show how we can use the methodology proposed in \cite{lin2023ensemble} to derive the same ELBO as in Eq.~(\textcolor{red}{26}). 

The joint distribution of the proposed ETGPSSM is given by:
\begin{equation}
\label{eq:ETGPSSM_joint_supp}
     p( \y_{1:T}, \x_{0:T}, \mathbf{w}, \tilde{f})  =  p(\y_{1:T}, \x_{0:T} \vert \mathbf{w}, \tilde{f}) p_{\bm{\psi}}(\mathbf{w}) p(\tilde{f}).
\end{equation}
The term $p(\y_{1:T}, \x_{0:T} \vert \mathbf{w}, \tilde{f})$ is factorized as:
\begin{equation}
     p(\y_{1:T}, \x_{0:T} \vert \mathbf{w}, \tilde{f})\!=\!p(\x_0)\! \prod_{t=1}^T \underbracket{p(\y_t \vert \x_t)}_{\text{likelihood}} \underbracket{p(\x_t \vert  \mathbf{w}, \tilde{f}, \x_{t-1})}_{\text{transition}},
\end{equation}
where the transition term is specified as:
\begin{align}
    p(\x_t \vert  \mathbf{w}, \tilde{f}, \x_{t-1}) = \int p(\x_t \vert \f_t) p(\f_t \vert \mathbf{w}, \tilde{f}, \x_{t-1}) \mathrm{d} \f_t  = \cN\left(\x_{t} \mid \bm{\alpha}_t \cdot \tilde{f}(\x_{t-1}) + \bm{\beta}_t, \ \mathbf{Q} \right),
\end{align}
with $p(\f_t \vert \mathbf{w}, \tilde{f}, \x_{t-1}) = \delta\left( \f_t - (\bm{\alpha}_t \cdot \tilde{f}(\x_{t-1}) + \bm{\beta}_t) \right)$.

We can employ the following variational distribution to infer the latent variables $\{\x_{0:T}, \mathbf{w}\}$ and the GP $\tilde{f}$: 
\begin{equation} 
\label{eq:ETGPSSM_variational_supp}
q(\x_{0:T}, \tilde{f}, \mathbf{w}) = q(\tilde{f}) q(\mathbf{w}) q(\x_0)\prod_{t=1}^T q(\x_t \vert \mathbf{w}, \tilde{f}, \x_{t-1}), 
\end{equation} 
which mirrors the generative structure of the model in Eq.~\eqref{eq:ETGPSSM_joint_supp} and assumes independence between $\tilde{f}$ and $\mathbf{w}$. 

Given the joint distribution of our model in Eq.~\eqref{eq:ETGPSSM_joint_supp} and the variational distribution in Eq.~\eqref{eq:ETGPSSM_variational_supp}, augmented by the sparse GP, we can formulate the ELBO $\mathcal{L}_2$ according to the general definition provided in Eq.~\eqref{eq:ELBO_general_supp}. The detailed derivations are presented as follows.
\begin{subequations}
\label{eq:ELBO_derivations}
    \begin{align}
        \mathcal{L}_2 & = \mathbb{E}_{q}\left[\log \frac{p(\y_{1:T}, \x_{0:T}, \mathbf{w}, \tilde{f}, \u)}{q(\x_{0:T}, \mathbf{w}, \tilde{f}, \u)} \right] \\
        & = \mathbb{E}_{q}\left[\log \frac{p(\tilde{f},\u) p(\mathbf{w}) p(\x_0) \prod_{t=1}^T p(\y_t \vert \x_t) p(\x_t \vert \mathbf{w}, \tilde{f}, \x_{t-1})}{q(\tilde{f},\u) q(\mathbf{w}) q(\x_0)\prod_{t=1}^T q(\x_t \vert \mathbf{w}, \tilde{f}, \x_{t-1})}\right]\\
        & =  \mathbb{E}_{q}\left[ \sum_{t=1}^T \log \frac{p(\y_t, \x_t \vert \mathbf{w}, \tilde{f}, \x_{t-1})}{q(\x_t \vert \mathbf{w}, \tilde{f}, \x_{t-1}) } \right] -\operatorname{KL}(q(\u) \| p(\u))   -\operatorname{KL}(q(\mathbf{w}) \| p(\mathbf{w})) -\operatorname{KL}(q(\x_0) \| p(\x_0))\\
        & = \mathbb{E}_{q}\left[ \sum_{t=1}^T\log \frac{p(\y_t, \x_t \vert \mathbf{w}, \tilde{f}, \x_{t-1}) p(\x_{t-1} \vert \mathbf{w}, \tilde{f}, \y_{1:t-1})}{q(\x_t \vert \mathbf{w}, \tilde{f}, \x_{t-1})p(\x_{t-1} \vert \mathbf{w}, \tilde{f}, \y_{1:t-1}) } \right] -\operatorname{KL}(q(\u) \| p(\u))   -\operatorname{KL}(q(\mathbf{w}) \| p(\mathbf{w})) -\operatorname{KL}(q(\x_0) \| p(\x_0)) \label{subeq:elbo_1_original}\\
        & \approx \mathbb{E}_{q}\left[ \sum_{t=1}^T \log \frac{p(\y_t, \x_t \vert \mathbf{w}, \tilde{f}, \x_{t-1}) p(\x_{t-1} \vert \mathbf{w}, \tilde{f}, \y_{1:t-1})}{ \underbracket{p(\x_t \vert \mathbf{w}, \tilde{f}, \x_{t-1}, \y_{1:t})}_{\text{assumption 1}} \underbracket{p(\x_{t-1} \vert \mathbf{w}, \tilde{f}, \y_{1:t})}_{\text{assumption 2}} } \right] -\operatorname{KL}(q(\u) \| p(\u))   -\operatorname{KL}(q(\mathbf{w}) \| p(\mathbf{w})) -\operatorname{KL}(q(\x_0) \| p(\x_0)) \label{subeq:elbo_2_approx}\\
        & = \mathbb{E}_{q}\left[ \sum_{t=1}^T  \log \frac{p(\y_t, \x_t, \x_{t-1} \vert \mathbf{w}, \tilde{f}, \y_{1:t-1})}{p(\x_t, \x_{t-1} \vert \mathbf{w}, \tilde{f}, \y_{1:t-1}, \y_t)} \right] -\operatorname{KL}(q(\u) \| p(\u))   -\operatorname{KL}(q(\mathbf{w}) \| p(\mathbf{w})) -\operatorname{KL}(q(\x_0) \| p(\x_0)) \label{subeq:elbo_3}\\
        & =  \mathbb{E}_{q(\mathbf{w}, \tilde{f})}\left[ \sum_{t=1}^T \log p(\y_t \vert \y_{1:t-1}, \mathbf{w}, \tilde{f}) \right] -\operatorname{KL}(q(\u) \| p(\u))   -\operatorname{KL}(q(\mathbf{w}) \| p(\mathbf{w})) -\operatorname{KL}(q(\x_0) \| p(\x_0)), \label{subeq:elbo_final}
    \end{align}
\end{subequations}
which is the objective function we have used in our paper. 
Note that from Eq.~\eqref{subeq:elbo_1_original} to Eq.~\eqref{subeq:elbo_2_approx}, we have made the following two assumptions, which is similar to the work in \cite{lin2023ensemble}:
\begin{assumption}
\begin{align}
  q(\x_{t} \vert \mathbf{w}, \tilde{f}, \x_{t-1}) \approx p(\x_{t} \vert \mathbf{w}, \tilde{f}, \x_{t-1}, \y_{1:t}).
\end{align}
\end{assumption}
\begin{assumption}
\begin{equation}
  p(\x_{t-1} \vert \mathbf{w}, \tilde{f}, \y_{1:t-1}) \approx p(\x_{t-1} \vert \mathbf{w}, \tilde{f}, \y_{1:t}).
\end{equation}
\end{assumption}
\noindent These two approximations used here are similar to those in \cite{lin2023ensemble}, with the key difference being the focus on the ETGP in this paper rather than the GP. The rationale behind these approximations is as follows:
\begin{itemize}
		\item Assumption 1) assumes that the posterior distribution of the latent state at time \( t-1 \) based on observations up to \( t-1 \) is nearly identical to that based on observations up to \( t \). This holds particularly well when \( t \) is large, as the additional observation at time \( t \) has minimal impact on the state estimate due to the accumulated information over time.
		\item Assumption 2) posits that the variational distribution $q(\x_{t} \vert \mathbf{w}, \tilde{f}, \x_{t-1})$ is approximated by the filtering distribution $ p(\x_{t} \vert \mathbf{w}, \tilde{f}, \x_{t-1}, \y_{1:t})$. Note that in variational inference, the variational distribution serves as an approximation of the true smoothing distribution $ p(\x_{t} \vert \mathbf{w}, \tilde{f}, \x_{t-1}, \y_{1:T})$. While there may be some estimation loss, the approximation is reasonable for long observation sequences where future observations minimally affect state estimates. This trade-off balances accuracy and computational efficiency, a crucial consideration in high-dimensional settings.
\end{itemize}

\begin{remark} 
It can be observed that, by adopting the approach from \cite{lin2023ensemble}, which relies on two specific assumptions, one can also derive the same objective function as presented in our main text. However, our contribution in this paper lies in providing an alternative perspective—one that is more straightforward and assumption-free—by leveraging only the EnKF for inference. This not only simplifies the derivation but also enhances the practicality of the method. 
\end{remark}

\section{Implementation Details} \label{supp:implementation_comparison}

\cblue{Throughout the paper, we use the network architecture and hyperparameters specified in Table~\ref{tab:hyperparameters}. For further details, please refer to the code repository available online.}
\begin{table}[h]
\centering
\cblue{
\caption{\cblue{Implementation details for experimental comparisons.}}
\label{tab:hyperparameters}
\begin{tabular}{rccc}
\toprule
\textbf{Component} & \textbf{ETGPSSM} & \textbf{AD-EnKF} \cite{chen2022autodifferentiable} & \textbf{EnVI} \cite{lin2023ensemble} \\
\midrule\midrule
\textbf{Transition Model} & ETGP (1 GP + NN flows) & FCN & $d_x$ independent GPs \\
\textbf{FCN Architecture} & $d_x\to128\to64\to2d_x$ &  $d_x\to128\to64\to d_x$ & -- \\
\textbf{Ensemble Size} ($N$) & 200 & 200 & 200 \\
\textbf{Optimizer} & Adam & Adam & Adam \\
\textbf{Learning Rate} & 0.005 & 0.005 & 0.005 \\
\textbf{Batch Size} & Full sequence & {Full sequence} & Full sequence \\
\textbf{Kernel Function} & SE & -- & SE \\
\textbf{Activation Function} & ReLU & ReLU & -- \\
\textbf{Training Epochs} & 1000 & 1000 & 1000 \\
\textbf{Early Stopping} & Yes & Yes & Yes \\
\bottomrule
\end{tabular}
}
\end{table}

\cblue{
In Section IV-B of the main paper, we adopt the following four performance metrics \cite{spantini2022coupling} to collectively provide a comprehensive assessment of both point estimation accuracy and uncertainty quantification in filtering.  
\begin{itemize}
    \item \textbf{\texttt{Root Mean Squared Error (RMSE)}}: Captures the average deviation between the estimated states and the true reference states.  
    \item \textbf{\texttt{Spread}}: Defined as the root mean trace of the ensemble covariance, this metric reflects the internal variability of the ensemble.
    \item \textbf{\texttt{Coverage Probability}}: Measures how frequently the true states fall within the 95\% empirical confidence intervals (constructed from the 2.5\% and 97.5\% quantiles) of the ensemble marginals.  
    \item \textbf{\texttt{Continuous Ranked Probability Score (CRPS)}}: A comprehensive scoring rule that quantifies how well the predictive distribution matches the observed outcomes \cite{gneiting2007probabilistic}. It evaluates the discrepancy between the forecasted and actual cumulative distribution functions, with lower values signifying more accurate predictions.  
\end{itemize}
}

\end{document}